\documentclass{article}
\usepackage[dblblindworkshop,final]{neurips_2026}
\workshoptitle{Transitioning from Pre-Training to Post-Training}

\usepackage[utf8]{inputenc}
\usepackage[T1]{fontenc}
\usepackage{algorithm}
\usepackage{algpseudocode}
\usepackage{hyperref}
\usepackage{url}
\usepackage{booktabs}
\usepackage{amsfonts}
\usepackage{amsmath}
\usepackage{mathtools}
\usepackage{amssymb}
\usepackage{amsthm}
\usepackage{nicefrac}
\usepackage{microtype}
\usepackage{xcolor}
\usepackage{colortbl}
\usepackage{tabularx}
\usepackage{graphicx}
\usepackage{multirow}
\usepackage{makecell}
\usepackage{placeins}
\usepackage{tikz}
\usetikzlibrary{arrows.meta,calc,shapes.geometric}

\makeatletter
\newcommand{\sidecaption}[1]{\def\@captype{#1}\caption}
\makeatother

\newcommand{\method}{ProbePO}
\newcommand{\Vhat}{\hat{V}}
\newcommand{\E}{\mathbb{E}}
\newcommand{\Var}{\mathrm{Var}}

\newtheorem{proposition}{Proposition}

\title{\method{}: Sparse Probes, Redistributed Credit for Critic-Free LLM Reinforcement Learning}

\author{%
  Diyi Hu\\
  Independent Researcher\\
  \texttt{diyihu.hu@gmail.com}
  \And
  Ruizhong Qiu\\
  University of Illinois Urbana-Champaign
  \And
  Lin Qiu\\
  Meta
  \And
  Hanghang Tong\\
  University of Illinois Urbana-Champaign
  \And
  Hanqing Zeng\\
  Meta
}

\begin{document}

\maketitle

\begin{abstract}
Reinforcement learning from verifiable rewards assigns one terminal reward to an entire reasoning trajectory, obscuring how individual reasoning steps contributed to the outcome.
GRPO broadcasts a group-normalized advantage to every response token, while finer-grained approaches often rely on dense auxiliary rollouts or branching schemes that can also alter the data used for policy updates.
We cast the missing middle as \emph{budgeted measurement}: with a limited rollout budget, which partial solutions should be evaluated, and how should their value estimates refine credit?
\method{} selects partial-solution states with a reward-free uncertainty score and estimates their expected verifier rewards under a capped continuation budget.
\method{} redistributes, rather than replaces, GRPO's trajectory-level advantage across tokens while preserving each trajectory's mean token-level advantage; probe continuations serve only as measurements.
Across three DeepSeekMath and Qwen-family models spanning 1.5B--7B parameters, \method{} consistently outperforms GRPO and achieves the highest accuracy among the four evaluated methods in 16 of 18 model--benchmark comparisons.
Against dense-probing VinePPO on DeepSeekMath-7B, \method{} achieves higher accuracy on all six benchmarks with 76\% less end-to-end training time.
\end{abstract}

\section{Introduction}
\label{sec:intro}

Reinforcement learning from verifiable rewards (RLVR) has become a central approach for improving the mathematical and coding reasoning of large language models \citep{shao2024deepseekmath, guo2025deepseekr1}.
Its supervision is nevertheless coarse: a verifier evaluates the completed response, leaving the contribution of intermediate reasoning implicit.
Group Relative Policy Optimization (GRPO) \citep{shao2024deepseekmath} assigns each completion a group-normalized advantage and broadcasts it to every response token.
This assignment is simple and effective, but it ignores the different roles that tokens play within a trajectory.
Under a positive trajectory-level advantage, an early detour in the reasoning trajectory and the later step that repairs it are reinforced equally; under a negative advantage, sound progress is penalized together with the mistake that invalidates it.

A response to a difficult prompt typically unfolds over multiple reasoning steps, creating natural intermediate points at which progress can be assessed.
Intermediate value measurements at these reasoning boundaries offer a direct way to refine this coarse allocation.
At such a boundary, we hold the generated prefix fixed, sample several possible suffixes from the current policy, and score the resulting completed responses with the verifier.
We call each sampled suffix (from the boundary through a final answer) a \emph{probe continuation}; averaging their rewards estimates the partial solution's expected terminal verifier reward under the current policy.
Differences between value estimates at successive boundaries indicate whether expected terminal verifier reward rose or fell across the intervening segment.
Evidence suggests that this expectation changes little across many reasoning steps, while a smaller number change it sharply \citep{kazemnejad2025vineppo}.
Because every probe continuation requires additional generation, measuring every eligible boundary can be costly.
A fixed continuation budget therefore creates a measurement-allocation problem:
\begin{center}
\emph{Which partial solutions are worth measuring, and how should those\\
measurements redistribute credit along the original trajectory?}
\end{center}

We present \method{} (\emph{probe policy optimization}), which adds a budgeted measurement layer to GRPO (Figure~\ref{fig:overview}).
It first ranks candidate reasoning-step boundaries with a lightweight, reward-free uncertainty score, then spends a capped probe budget on a few top-ranked candidates.
At each selected boundary, \method{} samples continuations directly from the rollout policy and averages their verifier rewards to estimate the pivot prefix's expected verifier reward.
These probe continuations serve exclusively as measurements, allowing \method{} to refine credit.
To improve the precision of each sparse estimate, \method{} regularizes it toward GRPO's group mean $\mu_g$.
It then uses differences between successive prefix-value estimates as segment-wise adjustments.
Rather than replacing GRPO's trajectory-level advantage, \method{} adds these adjustments as a zero-average redistribution across the response.
This preserves each trajectory's mean token-level advantage at its GRPO value while allowing different reasoning segments to receive different emphasis.

\paragraph{Contributions.}
\begin{enumerate}
  \item \textbf{Budgeted credit measurement.} We formulate intermediate credit assignment as allocating a finite continuation budget across partial-solution states, separating where to measure from how sparse measurements are converted into credit.
  \item \textbf{GRPO-anchored redistribution.} We introduce \method{}, combining reward-free pivot selection, regularized continuation-based value estimates, and a centered segment-wise correction. The correction exactly preserves each trajectory's mean token-level advantage at its GRPO value and recovers GRPO when no pivot is selected.
  \item \textbf{Accuracy and efficiency.} Across three models and six benchmarks, \method{} consistently outperforms GRPO. Controlled studies support both structured placement and shrinkage. At matched training time, \method{} remains more accurate than GRPO; against dense-probing VinePPO, it achieves higher accuracy on all six benchmarks with 76\% less wall-clock training time.
\end{enumerate}

\begin{figure}[t]
  \centering
  \noindent\makebox[\linewidth][c]{%
    \hspace*{-0.05\linewidth}%
    \resizebox{1.12\linewidth}{!}{\input{figure_latex/probepo_overview_hz_iter}}%
    \hspace*{0.05\linewidth}}
  \caption{\textbf{\method{} improves cost-quality tradeoff by assigning within-trajectory credit with a sparse-probe budget.}
  GRPO incurs no auxiliary-probe overhead but provides only trajectory-level credit. Dense per-boundary probing, exemplified by VinePPO, provides fine resolution at high rollout cost, while finite-sample Monte Carlo differences remain noise-sensitive. \method{} instead caps the auxiliary-probe budget, concentrates it on selected pivots, and shrinks their value estimates before deriving segment-wise credit. The pipeline first fixes the policy-update trajectories, then ranks and probes candidate boundaries under the group-wide budget, and finally adds a token-mean-zero segment residual to the GRPO base advantage.}
  \label{fig:overview}
\end{figure}

\section{Preliminaries}
\label{sec:prelim}

\paragraph{Setup and GRPO.}
A policy $\pi_\theta$ generates a continuation $y=(y_1,\dots,y_T)$ for a prompt $q$; a verifier returns an undiscounted terminal reward $R\in[0,1]$.
Let $s_t\coloneqq(q,y_{1:t})$ denote the partial-solution state after $t$ generated tokens, with $s_0=q$.
Training optimizes a clipped PPO surrogate \citep{schulman2017ppo} over per-token advantages $A_{i,t}$.
We use \emph{advantage} for the action-value contrast that weights a policy-gradient update, \emph{credit} for its allocation across tokens, and \emph{residual} for the zero-mean correction introduced in Section~\ref{sec:estimator}; a process reward is an intermediate reward signal rather than a synonym for advantage.
For each prompt, GRPO \citep{shao2024deepseekmath} samples a group of $G$ continuations with rewards $r_1,\dots,r_G$ and assigns continuation $i$ the group-normalized advantage
\begin{equation}
A_i = \frac{r_i-\mu_g}{\sigma_g},\qquad \mu_g=\tfrac1G\sum_j r_j,\quad \sigma_g=\mathrm{std}(\{r_j\})+\varepsilon,
\label{eq:grpo}
\end{equation}
broadcast to every token: $A_{i,t}=A_i$ for all $t$.
The group mean $\mu_g$ provides GRPO's prompt-level endpoint anchor; the standardized return contrast does not identify where within the continuation that contrast arose.

\paragraph{Token-level credit as value increments.}
For a fixed policy, let $(\mathcal F_t)_{t=0}^{T}$ be the filtration generated by the prompt and successive tokens, so $\mathcal F_t$ contains the information in $s_t=(q,y_{1:t})$.
Define $V_t^\pi\coloneqq\mathbb E_\pi[R\mid\mathcal F_t]$.
By the tower property, $\{V^\pi_t\}_{t=0}^T$ is a Doob martingale, with $V^\pi_0=\E_\pi[R\mid q]$ and $V^\pi_T=R$ when the completed response determines the verifier score.
Because appending token $y_t$ deterministically maps $s_{t-1}$ to $s_t$ and reward is terminal, $Q^\pi(s_{t-1},y_t)=V^\pi_t$.
The token-level action advantage, denoted $A^\pi(s_{t-1},y_t)$, is therefore
\begin{equation}
A^\pi(s_{t-1},y_t)=V^\pi_t-V^\pi_{t-1},
\qquad
\sum_{t=1}^{T}A^\pi(s_{t-1},y_t)=R-V^\pi_0.
\label{eq:telescope}
\end{equation}
GRPO does not estimate these tokenwise increments.
Instead, it assigns the trajectory-level advantage $A_i=(r_i-\mu_g)/\sigma_g$ to every response token: $A_{i,t}=A_i$.
VinePPO takes the dense alternative, estimating values at every eligible reasoning-step boundary and differencing adjacent estimates to construct step-level credit \citep{kazemnejad2025vineppo}.

\paragraph{What should a placement signal track?}
For a state $s$ and $a\sim\pi(\cdot\mid s)$, define $Q^\pi(s,a)\coloneqq\E_\pi[R\mid s,a]$.
The law of total variance gives
\begin{equation}
\Var_\pi(R\mid s)
=\underbrace{\E_{a\sim\pi(\cdot\mid s)}[\Var_\pi(R\mid s,a)]}_{\text{continuation variance}}
+\underbrace{\Var_{a\sim\pi(\cdot\mid s)}(Q^\pi(s,a))}_{\text{action-fork variance}}.
\label{eq:trichotomy}
\end{equation}
The action-fork term equals $\E_{a\sim\pi(\cdot\mid s)}[A^\pi(s,a)^2]$ and is large where the next-token choice can materially change expected terminal reward, making it a natural reference target for evaluating probe-placement signals.
By contrast, $\Var_\pi(R\mid s)$ determines the sampling variance of an $M$-continuation prefix-value estimate, $\Var(\Vhat_{\mathrm{raw}}(s)\mid s)=\Var_\pi(R\mid s)/M$.

\section{\method{}}
\label{sec:method}

\method{} addresses the budgeted-measurement problem posed in Section~\ref{sec:intro} through two layers. The acquisition layer selects boundaries and allocates the probe budget; the estimation layer converts the resulting sparse prefix-value estimates into token-level advantages.
A selected boundary is a \emph{pivot}, and a verifier-scored suffix generated from it is a \emph{probe continuation}.
Probe continuations serve only as measurements; the actor batch remains fixed.
Figure~\ref{fig:overview} illustrates the pipeline.

\subsection{The reward-free selector}
\label{sec:cascade}

\paragraph{Candidate boundaries.}
Within each trajectory we detect candidate boundaries deterministically at reasoning-step delimiters (e.g., paragraph breaks, enumerated step markers), discard any terminal boundary $t_b=T$, and merge boundaries closer than a minimum token gap.

\paragraph{Why reward-free?}
Action-fork variance (Equation~\ref{eq:trichotomy}) serves as a reference target rather than an online selection signal: estimating it at every candidate would either require extra verifier-scored continuations solely for selection or, if those continuations were reused for value estimation, risk post-selection bias.
We therefore rank candidate boundaries using the reward-free forward-window maximum entropy score $H_{\max}$, computed from backbone logits without auxiliary rollouts or reward queries.
Section~\ref{sec:frontier} evaluates how well this score tracks action-fork variance empirically.

\paragraph{Score.}
For a backbone state after token $t$, define its full-vocabulary next-token entropy $H_t$; each candidate boundary $b$ at position $t_b$ receives
\begin{equation}
\begin{aligned}
H_t
  &= -\sum_{v\in\mathcal V}\pi_\theta(v\mid q,y_{1:t})
       \log \pi_\theta(v\mid q,y_{1:t}),\\
H_{\max}(b)
  &= \max_{\substack{0\leq \ell<L\\ t_b+\ell<T}} H_{t_b+\ell},\\
s(b)
  &= H_{\max}(b)\,w_{\mathrm{pos}}(t_b/T).
\end{aligned}
\label{eq:selector}
\end{equation}
Here $L=8$ in every reported arm, and the maximum is evaluated on the realized backbone tokens.
The optional factor $w_{\mathrm{pos}}(u)$ can encode a preference over normalized boundary position $u=t_b/T$; all reported experiments use $w_{\mathrm{pos}}(u)\equiv1$, so this factor does not affect the ranking.
The forward maximum reduces sensitivity to exact boundary alignment by allowing a nearby high-entropy state to represent the boundary. Because it uses logits produced during backbone generation, scoring requires no additional rollouts.
The top fraction of candidates per trajectory forms the shortlist for group-pooled allocation.

\paragraph{Group-pooled budget.}
We define budget $B$ as the number of probe continuations across all $G$ trajectories.
We rank shortlisted boundaries jointly and select $\lfloor B/M\rfloor$ pivots.
Each pivot is evaluated with $M$ independent continuations.
Pooling is deliberate: a trajectory rich in high-ranked boundaries may receive several pivots while a flat one receives none.

\paragraph{Probing.}
At each selected pivot $\tau$ in trajectory $i$ we freeze the prefix $y_{i,1:\tau}$, sample $M$ probe continuations to completion from the same current policy $\pi_\theta$ that generated the backbone batch, score each with the same verifier that produces training rewards, and set $\Vhat_{\mathrm{raw}}(\tau)=\tfrac1M\sum_m \tilde r_m$.
The probe mean estimates the current policy's value $V^{\pi_\theta}(s_\tau)$ under the training verifier, provided sampling matches $\pi_\theta$, continuations terminate, and the verifier score is recoverable from the generated suffix (Appendix~\ref{app:forced_answer}).
Its standard error is at most $0.5/\sqrt{M}$ for conditionally independent rewards in $[0,1]$.
This finite-sample uncertainty motivates the shrinkage introduced in Section~\ref{sec:estimator}.

\subsection{GRPO-anchored credit redistribution}
\label{sec:estimator}

Consider trajectory $i$ from group $g$, with length $T_i$, reward $r_i$, group statistics $\mu_g,\sigma_g$, and $K\geq 0$ selected pivots at positions $0<\tau_1<\cdots<\tau_K<T_i$.
Let $A_{\mathrm{base},i}$ denote the trajectory-level base advantage assigned by the configured GRPO rule; under Eq.~\eqref{eq:grpo}, $A_{\mathrm{base},i}=(r_i-\mu_g)/\sigma_g$.
We add endpoint anchors $\tau_0=0$ and $\tau_{K+1}=T_i$, with values $\Vhat(\tau_0)=\mu_g$ and $\Vhat(\tau_{K+1})=r_i$.

\paragraph{Shrinkage.}
Each interior value averages $M$ probe returns.
We regularize this estimate toward the group baseline using linear pseudo-count shrinkage:
\begin{equation}
\Vhat(\tau_k)
=
\frac{M\,\Vhat_{\mathrm{raw}}(\tau_k)+n_0\,\mu_g}{M+n_0},
\qquad k=1,\ldots,K.
\label{eq:shrink}
\end{equation}
Here $n_0\geq 0$ controls the shrinkage strength, and $M/(M+n_0)$ is the weight on the raw estimate.
For binary verifier rewards, Eq.~\eqref{eq:shrink} is the posterior mean under a standard Beta--Bernoulli conjugate model \citep[Ch.~2]{gelman2013bayesian}; Appendix~\ref{app:proofs} gives this interpretation and the estimator's conditional bias--variance decomposition.

\paragraph{Segment-wise redistribution.}
The pivots partition the response into segments
$I_k=\{t:\tau_{k-1}<t\leq\tau_k\}$ with lengths
$\ell_k=\tau_k-\tau_{k-1}$.
For each segment, define the value increment
\[
d_k=\Vhat(\tau_k)-\Vhat(\tau_{k-1}),
\qquad k=1,\ldots,K+1.
\]
The endpoint anchors imply
$\sum_{k=1}^{K+1}d_k=r_i-\mu_g$.
When $K=1$, the two increments measure the change from the group baseline to the pivot and from the pivot to the observed return, respectively.

Let $k(t)$ denote the segment containing token $t$.
We broadcast each segment increment to its tokens as the raw token-level advantage
$A^{\mathrm{raw}}_{i,t}=d_{k(t)}/\sigma_g$, then subtract its
per-trajectory token mean:
\begin{equation}
\tilde A_{i,t}
=
A^{\mathrm{raw}}_{i,t}
-
\frac{1}{T_i}\sum_{s=1}^{T_i}A^{\mathrm{raw}}_{i,s},
\qquad
\sum_{t=1}^{T_i}\tilde A_{i,t}=0.
\label{eq:center}
\end{equation}
The final token-level advantage is
\begin{equation}
\boxed{
A_{i,t}
=
A_{\mathrm{base},i}
+
\alpha\,\tilde A_{i,t}
},
\label{eq:final}
\end{equation}
where $\alpha$ controls the magnitude of the segment-wise redistribution.
The GRPO base advantage determines the token mean of $A_{i,t}$, while the centered residual redistributes credit among the trajectory's segments.

\begin{proposition}[Token-mean preservation and recovery at zero pivots]
\label{prop:mean}
With $A_{\mathrm{base},i}$ set to the GRPO advantage in Eq.~\eqref{eq:grpo}, for every trajectory, pivot set, collection of interior estimates, and value of $\alpha$,
\[
\frac{1}{T_i}\sum_{t=1}^{T_i}A_{i,t}
=
\frac{r_i-\mu_g}{\sigma_g}.
\]
If $K=0$, then
$A_{i,t}=(r_i-\mu_g)/\sigma_g$ for every response token.
\end{proposition}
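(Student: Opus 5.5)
The argument is a direct computation from the definitions in Eqs.~\eqref{eq:center} and~\eqref{eq:final}; no probabilistic input is needed, since the claim holds pointwise for every realization of the pivots and interior estimates. We will first average Eq.~\eqref{eq:final} over the response tokens $t=1,\dots,T_i$. Because $A_{\mathrm{base},i}$ does not depend on $t$, linearity of the token mean gives
\[
\frac{1}{T_i}\sum_{t=1}^{T_i}A_{i,t}
=
A_{\mathrm{base},i}+\frac{\alpha}{T_i}\sum_{t=1}^{T_i}\tilde A_{i,t}.
\]
We then invoke the centering identity in Eq.~\eqref{eq:center}. Since $\tilde A_{i,t}$ is $A^{\mathrm{raw}}_{i,t}$ minus its own token mean, summing over $t$ cancels exactly. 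This gives $\sum_t\tilde A_{i,t}=0$ regardless of the values $d_k$, the shrinkage strength $n_0$, the pivot positions, or $\alpha$. Substituting $A_{\mathrm{base},i}=(r_i-\mu_g)/\sigma_g$ from Eq.~\eqref{eq:grpo} yields the first claim.

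For the case $K=0$, there are no interior pivots. The only segment is then $I_1=\{1,\dots,T_i\}$, with anchors $\tau_0=0$ and $\tau_1=T_i$. Its increment is $d_1=\Vhat(\tau_1)-\Vhat(\tau_0)=r_i-\mu_g$. Hence $A^{\mathrm{raw}}_{i,t}=(r_i-\mu_g)/\sigma_g$ is constant in $t$ and equals its own mean, so $\tilde A_{i,t}=0$ for every $t$. Eq.~\eqref{eq:final} then reduces to $A_{i,t}=A_{\mathrm{base},i}=(r_i-\mu_g)/\sigma_g$. Note that this step uses only constancy of $A^{\mathrm{raw}}$, not the particular value of $d_1$.

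None of these steps is a real obstacle. The only thing to check is well-definedness, which requires $T_i\ge1$ and $\sigma_g>0$. The first holds for any nonempty response, and the second holds because $\sigma_g$ includes the additive $\varepsilon$.
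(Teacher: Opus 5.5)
Your proposal is correct and follows essentially the same route as the paper's proof: the centering in Eq.~\eqref{eq:center} gives $\sum_t \tilde A_{i,t}=0$, so averaging Eq.~\eqref{eq:final} returns $A_{\mathrm{base},i}$; for $K=0$, the raw advantage $d_1/\sigma_g$ is constant, so its centered residual vanishes. Your well-definedness remark ($T_i\ge 1$, and $\sigma_g>0$ via the additive $\varepsilon$) is a harmless addition that the paper leaves implicit.
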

The centered residual preserves the trajectory's GRPO token-mean advantage while redistributing its token-level policy-gradient weights. Appendix~\ref{app:proofs} gives the proof and the matched conditions under which $K=0$ recovers the GRPO update.

Algorithm~\ref{alg:probepo} summarizes one \method{} training step combining both layers.
\begin{algorithm}[H]
\caption{\method{} training step}
\label{alg:probepo}
\begin{algorithmic}[1]
\State \textbf{Input:} prompts and current policy $\pi_\theta$
\State \textbf{Parameters:} group size $G$, probe budget $B$, continuations per pivot $M$, shrinkage $n_0$, and residual weight $\alpha$
\State \textbf{For each prompt:} sample $G$ backbone completions from $\pi_\theta$; verify $\to$ rewards $r_i$
\For{each group $g$}
  \State Compute $(\mu_g,\sigma_g)$ and the configured base advantages $A_{\mathrm{base},i}$
  \For{each trajectory $i$ in group $g$}
    \State Detect candidate boundaries, score them by Eq.~\eqref{eq:selector}, and retain the top fraction
  \EndFor
    \State Rank shortlisted candidates and select $\lfloor B/M\rfloor$ pivots. \label{algl:fund}
  \For{each selected pivot $(i,\tau)$}
    \State Generate $M$ fresh probe continuations from $\pi_\theta$; verify and average rewards $\to \Vhat_{\mathrm{raw}}(\tau)$ \label{algl:probe}
  \EndFor
  \For{each trajectory $i$ in group $g$}
    \State Set the endpoint anchors and shrink interior values by Eq.~\eqref{eq:shrink} \label{algl:estb}
    \State Compute $\tilde A_{i,t}$ and $A_{i,t}$ by Eqs.~\eqref{eq:center}--\eqref{eq:final} \label{algl:este}
  \EndFor
\EndFor
\State Update $\theta$ with the standard GRPO objective, using $A_{i,t}$ in place of GRPO's broadcast trajectory-level advantage
\end{algorithmic}
\end{algorithm}

\section{Related work}
\label{sec:related}

\paragraph{Trajectory-level credit assignment.} GRPO \citep{shao2024deepseekmath} and GSPO \citep{chujie2025group} broadcast a group-relative advantage across each completion. DAPO \citep{yu2025dapo} removes zero-variance groups after rollout, whereas GRESO \citep{zheng2025greso} skips predicted zero-variance prompts beforehand; Dr.~GRPO \citep{liu2025drgrpo} removes length normalization and group standardization. Recent variants rescale or transform trajectory advantages ($\lambda$-GRPO \citep{wang2025lambda}; $f$-GRPO \citep{haldar2026f}), iteratively refine trajectories (iGRPO \citep{hatamizadeh2026igrpo}), or prioritize replay using trajectory rewards \citep{yoo2026rollout}. A trajectory-level signal cannot distinguish helpful from harmful regions of the same response; \method{} resolves this while retaining GRPO's trajectory-level anchor.

\paragraph{Token- and step-level credit assignment.} Process reward models (PRMs) learn dense step feedback from labeled or automatically constructed process data \citep{wang2024mathshepherd,setlur2025rewarding}. RED differences a sequence reward model's scores on consecutive prefixes to obtain token rewards, whereas T-REG regularizes implicit token rewards toward targets produced by contrastive prompts \citep{li2025red,zhou2025t}. VinePPO instead estimates policy-matched values by probing every reasoning-step boundary \citep{kazemnejad2025vineppo}. Together, these methods expose a supervision--cost trade-off: learned or self-generated supervision adds label construction, model inference, or prompted-scoring costs and may not match current-policy verifier success, while dense policy-matched probing incurs high rollout cost. \method{} targets the middle ground with verifier-grounded credit under a capped sparse-measurement budget and no separate reward model.

\paragraph{Segment-level credit assignment.} Segment methods assign shared credit within contiguous spans. SPO partitions trajectories at fixed intervals or low-probability cutpoints, estimates boundary values from policy continuations, and differences adjacent MC means \citep{guo2025spo}. Other approaches derive segment scores from Shapley or intervention-based attribution, generative PRMs, or gold-answer probabilities \citep{cao2025scar,xie2025capo,yang2026int,wang2026grpo,li2026ssvpo}. SPO provides current-policy segment values, but its rollout cost grows with the number of probed boundaries. \method{} instead concentrates a fixed group-wide budget on high-ranked boundaries, regularizes the resulting estimates, and centers the segment correction so local refinements preserve the trajectory's overall GRPO credit.

\paragraph{Tree-based credit assignment.} Online tree methods obtain intermediate evidence by branching trajectories and commonly reuse descendants as policy-update samples \citep{guo2025spo,yang2025treerpo,li2025treepo,zhao2026reinforced}. TEMPO constructs a prefix tree post hoc from an ordinary GRPO batch \citep{tran2025exploiting}, while OmegaPRM allocates trees offline to construct PRM labels \citep{luo2024omegaprm}. Online reuse amortizes branching cost, but branch placement and fanout affect both value evidence and which samples enter the actor update, making credit gains difficult to separate from exploration and data reuse. \method{} also branches, but uses a capped sparse continuation budget only for measurement while leaving the backbone actor batch unchanged (Appendix~\ref{app:trees}).

\section{Experiments}
\label{sec:experiments}

We ask four questions: Does \method{} improve reasoning accuracy across model
families? Do structured placement and shrinkage contribute to its performance?
How do its accuracy and training cost compare with GRPO and dense probing? Does
the reward-free selector identify informative boundaries?

\subsection{Experimental setup}
\label{sec:setup}
\label{sec:protocol}

\paragraph{Backbones and baselines.}
We evaluate three complementary backbones spanning 1.5B--7B parameters: the math-specialized DeepSeekMath-7B-SFT \citep{shao2024deepseekmath}, the compact reasoning-distilled R1-Distill-Qwen-1.5B \citep{guo2025deepseekr1}, and Qwen3-4B-Base \citep{qwen3_2025}. GRPO provides the trajectory-level reference.
SPO-tree estimates segment credit from branching rollouts
\citep{guo2025spo}, while TreePO uses tree-structured exploration and credit
assignment \citep{li2025treepo}. For each model, GRPO, SPO-tree, TreePO, and
\method{} use the same training problems, evaluation protocol, context limits,
and training horizon, enabling direct within-model comparability. We evaluate VinePPO
separately as the dense policy-matched baseline, which estimates values at
every eligible reasoning-step boundary \citep{kazemnejad2025vineppo}.
Tables~\ref{tab:probepo-shared-recipe}--\ref{tab:probepo-decoding-recipe}
in Appendix~\ref{app:repro} give the headline \method{} recipes and evaluation
protocol.

\paragraph{Training data.}
Following established RL-on-math protocols for each model family, DeepSeekMath
and R1 use VinePPO's PRM800K/MATH repartition
\citep{lightman2024lets,kazemnejad2025vineppo}, while Qwen3 uses the original
MATH training split adopted by ROSE \citep{zhao2026reinforced}.  All four
algorithms for a given model train on the same problem identities.
Appendix~\ref{app:repro} specifies the split construction, filtering, and
prompt templates.

\paragraph{Metrics and checkpoint rule.}
Mean@$k$ averages correctness over $k$ sampled responses per problem and then across problems. We use $k=16$ for MATH-500, OlympiadBench, CollegeMath, AMC, and MinervaMath, and $k=32$ for AIME24. \textbf{Avg-6} is the unweighted mean of all six reported dataset-level point estimates. Table~\ref{tab:main} reports each method's terminal checkpoint at the shared within-model training horizon: step 1{,}000 for DeepSeekMath and R1, and step 27 for Qwen3.

\subsection{Main results}
\label{sec:mainresults}

\begin{table}[t]
\centering
\caption{\textbf{Terminal accuracy (\%) across three backbones and six mathematical reasoning benchmarks.}
\textbf{Avg-6} is the unweighted benchmark mean; bold marks the best score within each backbone.}
\label{tab:main}
\small
\definecolor{tablegroup}{RGB}{244,245,247}
\definecolor{tableours}{RGB}{238,246,241}
\definecolor{tableavg}{RGB}{247,247,247}
\renewcommand{\arraystretch}{1.03}
\setlength{\tabcolsep}{2.4pt}
\begin{tabularx}{\linewidth}{@{}l*{6}{>{\centering\arraybackslash}X}>{\columncolor{tableavg}\centering\arraybackslash}X@{}}
\toprule
\textbf{Method} & \textbf{MATH-500} & \makecell[c]{\textbf{Olympiad}\\\textbf{Bench}} & \makecell[c]{\textbf{College}\\\textbf{Math}} & \textbf{AMC} & \makecell[c]{\textbf{Minerva}\\\textbf{Math}} & \makecell[c]{\textbf{AIME24}\\\textbf{(@32)}} & \textbf{Avg-6} \\
\midrule
\rowcolor{tablegroup}
\multicolumn{8}{l}{\textbf{DeepSeekMath-7B-SFT}} \\
GRPO & 42.55 & 12.67 & 22.57 & 14.38 & 11.79 & 1.77 & 17.62 \\
SPO-tree & 37.58 & 9.93 & 20.11 & 12.65 & 10.96 & 0.83 & 15.34 \\
TreePO & 43.86 & 11.73 & 22.46 & 11.22 & 9.38 & 0.83 & 16.58 \\
\rowcolor{tableours}
\textbf{\method{} (ours)} & \textbf{45.81} & \textbf{14.47} & \textbf{22.77} & \textbf{17.09} & \textbf{12.96} & \textbf{2.92} & \textbf{19.34} \\
\midrule
\rowcolor{tablegroup}
\multicolumn{8}{l}{\textbf{R1-Distill-Qwen-1.5B}} \\
GRPO & 80.10 & 46.08 & 55.45 & 54.89 & 29.69 & 12.71 & 46.49 \\
SPO-tree & 78.93 & 43.33 & 54.88 & 52.41 & 31.09 & \textbf{15.42} & 46.01 \\
TreePO & 80.36 & 46.94 & 54.73 & 53.92 & \textbf{31.18} & 14.69 & 46.97 \\
\rowcolor{tableours}
\textbf{\method{} (ours)} & \textbf{82.34} & \textbf{48.44} & \textbf{57.31} & \textbf{58.13} & 30.28 & 13.75 & \textbf{48.37} \\
\midrule
\rowcolor{tablegroup}
\multicolumn{8}{l}{\textbf{Qwen3-4B-Base}} \\
GRPO & 76.00 & 38.50 & 31.59 & 47.14 & 24.26 & 12.08 & 38.26 \\
SPO-tree & 75.93 & 37.86 & 31.44 & 47.82 & 23.85 & 13.70 & 38.43 \\
TreePO & 77.48 & 37.89 & 30.29 & 49.02 & 24.06 & 15.52 & 39.04 \\
\rowcolor{tableours}
\textbf{\method{} (ours)} & \textbf{79.40} & \textbf{41.13} & \textbf{31.99} & \textbf{50.53} & \textbf{26.10} & \textbf{17.50} & \textbf{41.11} \\
\bottomrule
\end{tabularx}
\end{table}

\paragraph{Overall accuracy.}
Table~\ref{tab:main} shows that, across all three models, \method{} achieves the highest Avg-6, exceeding GRPO by $+1.72$, $+1.88$, and $+2.85$ points on DeepSeekMath, R1-Distill-Qwen-1.5B, and Qwen3, respectively.
It outperforms GRPO in all 18 model--benchmark comparisons and ranks first among the four methods in 16 of 18, demonstrating broad gains across models and benchmarks.

\paragraph{Within-response credit.}
Figure~\ref{fig:credit-examples} shows how Eq.~\eqref{eq:final} redistributes
credit within individual responses. Column~(a) asks for three-step paths
between opposite cube vertices. The response detours into graph search in
$I_2$ before recovering the $3!=6$ permutation argument in $I_3$; \method{}
penalizes the detour and assigns the recovery the largest positive credit.
Column~(b) asks for $QA$ in a parallelogram ($AC=15$, ground truth $10$).
After deriving $10$, the response reuses the wrong ratio in $I_3$ and answers
$5$; the reversal receives the strongest penalty, while the earlier derivation
is penalized less than by GRPO. Together,
the examples show that \method{} distinguishes consequential segments that
GRPO treats uniformly while preserving the response's GRPO token mean.

\begin{figure}[t]
  \centering
  \includegraphics[width=\linewidth]{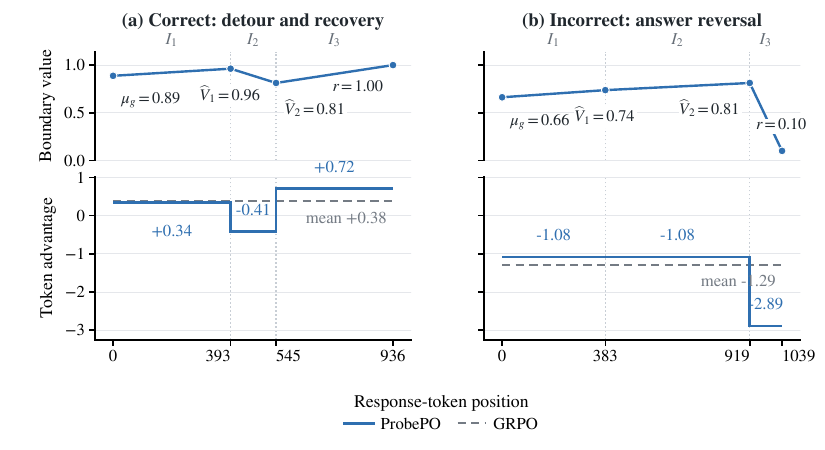}
  \caption{\textbf{Boundary values and segment credit.}
  Two held-out MATH responses from the trained R1-Distill-Qwen-1.5B checkpoint: (a) a correct cube-path response and (b) an incorrect parallelogram response. Top panels show the anchor, shrunk pivot estimates ($M=8,n_0=4$), and terminal reward; bottom panels compare \method{} and GRPO advantages on a shared scale, with widths proportional to token counts. The detour ($I_2$, a) receives negative credit; the answer reversal ($I_3$, b) receives the strongest penalty. Each \method{} profile preserves the GRPO token mean; in (b), $r=0.10$ denotes a boxed but incorrect answer. Appendix~\ref{app:credit-examples} provides the full contexts and pooled-$M=32$ check.}
  \label{fig:credit-examples}
\end{figure}

\FloatBarrier
\subsection{Controlled ablations: placement and shrinkage}
\label{sec:component-ablations}

\paragraph{Controlled design.}
We ablate the structured placement module and shrinkage in a controlled Qwen3-4B study.
All arms share the training and evaluation setup and retain every generated group; probe-based arms use the same budget ($B=40$, $M=8$).

\paragraph{Ablation arms.}
\emph{Random placement} samples eligible interior token positions uniformly instead of using the structured selector, while keeping the probe budget and credit estimator fixed.
\emph{No shrinkage} keeps the structured selector and probe budget fixed but uses the unregularized continuation mean.

\paragraph{Ablation results.}
Table~\ref{tab:qwen-component-ablation} shows that, at the matched checkpoint, replacing structured placement with random placement lowers MATH-500 by 2.29 points and Avg-6 by 1.30; removing shrinkage lowers them by 3.17 and 1.24 points, respectively.
Removing shrinkage also raises the mean per-update 99th percentile of $|A_t|$ by 19.6\%, supporting its regularization role (Appendix~\ref{app:qwen-controls}).
These complementary losses support the central design: sparse measurements are most effective when allocated to informative states and regularized before credit redistribution.

\subsection{Training efficiency}
\label{sec:matched-time}

\paragraph{Matched wall-clock time.}
Using the same learning rate, KL coefficient, and eight-B200 allocation, we compare \method{} and GRPO at matched cumulative training time, including backbone and auxiliary generation, optimization, evaluation, and checkpointing.
Figure~\ref{fig:qwen-matched-time} traces MATH-500 accuracy over cumulative training time.

\begingroup
\setlength{\intextsep}{2pt}
\begin{figure}[H]
\centering
\begin{minipage}[c]{0.53\linewidth}
  \vspace{0pt}
  \centering
  \small
  \setlength{\tabcolsep}{2pt}
  \renewcommand{\arraystretch}{1.10}
  \begin{tabular*}{\linewidth}{@{\extracolsep{\fill}}lcc@{}}
    \toprule
    Variant & MATH-500 & Avg-6 \\
    \midrule
    \method{} & \textbf{79.35} & \textbf{41.36} \\
    Random placement & 77.06 & 40.06 \\
    No shrinkage & 76.18 & 40.12 \\
    \bottomrule
  \end{tabular*}
  \sidecaption{table}{\textbf{Qwen3-4B ablations} (\%).}
  \label{tab:qwen-component-ablation}
\end{minipage}\hfill
\begin{minipage}[c]{0.45\linewidth}
  \vspace{0pt}
  \centering
  \includegraphics[width=\linewidth]{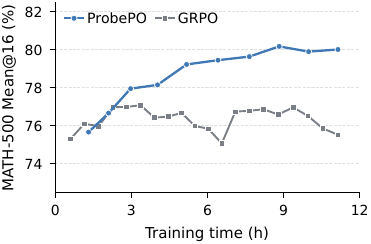}
  \caption{\textbf{Qwen3-4B MATH-500 accuracy at matched training time.}}
  \label{fig:qwen-matched-time}
\end{minipage}
\end{figure}
\endgroup

At 11.14 training hours, \method{} outperforms GRPO by 4.49 points on
MATH-500 and 1.95 points on Avg-6.
Sparse, structured measurements therefore improve accuracy per unit of end-to-end training time.
Endpoint and timing details appear in Appendix~\ref{app:qwen-equal-time}.

\paragraph{Sparse versus dense probing.}
\label{sec:vineppo-efficiency}

VinePPO is the closest dense policy-matched comparator: it estimates values at every eligible reasoning boundary \citep{kazemnejad2025vineppo}.
Under the shared DeepSeekMath-7B-SFT/MATH setup, \method{} exceeds VinePPO on all six benchmarks (19.34 versus 17.31 Avg-6) while using 76\% less training time; VinePPO uses 10.7$\times$ as many auxiliary continuations per update.
Appendix~\ref{app:vineppo-full} reports the per-benchmark results and training-cost details.

\subsection{Offline selector validation}
\label{sec:frontier}
\label{sec:receipts}

We evaluate the reward-free selector post hoc on archived DeepSeekMath-7B-SFT
trajectories using two complementary criteria. At each candidate boundary, we
estimate action-fork variance by sampling several next-token actions and
estimating each action's mean return from repeated continuations. Forward
entropy achieves Spearman correlations of $0.271$ on MATH-precalc and $0.268$
on GSM8K with estimated action-fork variance, with prompt-level bootstrap
intervals excluding zero on both (Figure~\ref{fig:oracle}a). Under a
one-boundary budget, its top-ranked boundary captures $18.0$ and $10.9$
percentage points more within-trajectory variance than random selection,
respectively (Figure~\ref{fig:oracle}b). The random-placement ablation in
Section~\ref{sec:component-ablations} provides complementary downstream
evidence: replacing structured placement with uniform interior-token placement
at the same probe budget lowers MATH-500 accuracy by $2.29$ points.
Appendix~\ref{app:oracle} provides the full protocol and uncertainty scope.

\section{Conclusion}

\method{} allocates sparse measurements to reasoning boundaries and redistributes GRPO credit while preserving each trajectory's mean token-level advantage. Across three models and six benchmarks, it consistently outperforms GRPO, while controlled ablations support structured placement and shrinkage. At matched training time, \method{} improves Avg-6 over GRPO by 1.95 points; against VinePPO, it improves Avg-6 by 2.03 points with 76\% less wall-clock training time. Together, these results show that budgeted, measurement-only probing enables efficient fine-grained credit without dense process supervision.

\bibliographystyle{plainnat}
\bibliography{references}

@misc{qwen3_2025,
  title={{Qwen3} Technical Report},
  author={An Yang and Anfeng Li and Baosong Yang and Beichen Zhang and Binyuan Hui and Bo Zheng and Bowen Yu and Chang Gao and Chengen Huang and Chenxu Lv and Chujie Zheng and Dayiheng Liu and Fan Zhou and Fei Huang and Feng Hu and Hao Ge and Haoran Wei and Huan Lin and Jialong Tang and Jian Yang and Jianhong Tu and Jianwei Zhang and Jianxin Yang and Jiaxi Yang and Jing Zhou and Jingren Zhou and Junyang Lin and Kai Dang and Keqin Bao and Kexin Yang and Le Yu and Lianghao Deng and Mei Li and Mingfeng Xue and Mingze Li and Pei Zhang and Peng Wang and Qin Zhu and Rui Men and Ruize Gao and Shixuan Liu and Shuang Luo and Tianhao Li and Tianyi Tang and Wenbiao Yin and Xingzhang Ren and Xinyu Wang and Xinyu Zhang and Xuancheng Ren and Yang Fan and Yang Su and Yichang Zhang and Yinger Zhang and Yu Wan and Yuqiong Liu and Zekun Wang and Zeyu Cui and Zhenru Zhang and Zhipeng Zhou and Zihan Qiu},
  year={2025},
  eprint={2505.09388},
  archivePrefix={arXiv},
  url={https://arxiv.org/abs/2505.09388},
}

@misc{luo2024omegaprm,
  title={Improve Mathematical Reasoning in Language Models by Automated Process Supervision},
  author={Liangchen Luo and Yinxiao Liu and Rosanne Liu and Samrat Phatale and Meiqi Guo and Harsh Lara and Yunxuan Li and Lei Shu and Yun Zhu and Lei Meng and Jiao Sun and Abhinav Rastogi},
  year={2024},
  eprint={2406.06592},
  archivePrefix={arXiv},
  url={https://arxiv.org/abs/2406.06592},
}

@inproceedings{zheng2025greso,
  title={Act Only When It Pays: Efficient Reinforcement Learning for {LLM} Reasoning via Selective Rollouts},
  author={Zheng, Haizhong and Zhou, Yang and Bartoldson, Brian and Kailkhura, Bhavya and Lai, Fan and Zhao, Jiawei and Chen, Beidi},
  booktitle={Advances in Neural Information Processing Systems},
  volume={38, Main Conference},
  pages={124321--124346},
  publisher={Curran Associates, Inc.},
  year={2025},
  doi={10.52202/085713-4143},
  eprint={2506.02177},
  archivePrefix={arXiv},
  url={https://proceedings.neurips.cc/paper_files/paper/2025/hash/b42fb82fc3ffb88b872c4714f093875d-Abstract-Conference.html},
}

@inproceedings{kazemnejad2025vineppo,
  title={{VinePPO}: Refining Credit Assignment in {RL} Training of {LLM}s},
  author={Kazemnejad, Amirhossein and Aghajohari, Milad and Portelance, Eva and Sordoni, Alessandro and Reddy, Siva and Courville, Aaron and Le Roux, Nicolas},
  booktitle={Proceedings of the 42nd International Conference on Machine Learning},
  pages={29557--29590},
  year={2025},
  volume={267},
  series={Proceedings of Machine Learning Research},
  publisher={PMLR},
  eprint={2410.01679},
  archivePrefix={arXiv},
  url={https://proceedings.mlr.press/v267/kazemnejad25a.html},
}

@inproceedings{hendrycks2021measuring,
  title={Measuring Mathematical Problem Solving With the {MATH} Dataset},
  author={Dan Hendrycks and Collin Burns and Saurav Kadavath and Akul Arora and Steven Basart and Eric Tang and Dawn Song and Jacob Steinhardt},
  booktitle={Proceedings of the Neural Information Processing Systems Track on Datasets and Benchmarks},
  volume={1},
  year={2021},
  url={https://datasets-benchmarks-proceedings.neurips.cc/paper/2021/hash/be83ab3ecd0db773eb2dc1b0a17836a1-Abstract-round2.html},
}

@inproceedings{lightman2024lets,
  title={Let's verify step by step},
  author={Lightman, Hunter and Kosaraju, Vineet and Burda, Yuri and Edwards, Harrison and Baker, Bowen and Lee, Teddy and Leike, Jan and Schulman, John and Sutskever, Ilya and Cobbe, Karl},
  booktitle={International Conference on Learning Representations},
  year={2024}
}

@misc{schulman2017ppo,
  title={Proximal Policy Optimization Algorithms},
  author={John Schulman and Filip Wolski and Prafulla Dhariwal and Alec Radford and Oleg Klimov},
  year={2017},
  eprint={1707.06347},
  archivePrefix={arXiv},
  url={https://arxiv.org/abs/1707.06347},
}

@misc{shao2024deepseekmath,
  title={{DeepSeekMath}: Pushing the Limits of Mathematical Reasoning in Open Language Models},
  author={Zhihong Shao and Peiyi Wang and Qihao Zhu and Runxin Xu and Junxiao Song and Xiao Bi and Haowei Zhang and Mingchuan Zhang and Y. K. Li and Y. Wu and Daya Guo},
  year={2024},
  eprint={2402.03300},
  archivePrefix={arXiv},
  url={https://arxiv.org/abs/2402.03300},
}

@misc{liu2025drgrpo,
  title={Understanding {R1-Zero-Like} Training: A Critical Perspective},
  author={Zichen Liu and Changyu Chen and Wenjun Li and Penghui Qi and Tianyu Pang and Chao Du and Wee Sun Lee and Min Lin},
  year={2025},
  eprint={2503.20783},
  archivePrefix={arXiv},
  url={https://arxiv.org/abs/2503.20783},
}

@misc{guo2025deepseekr1,
  title={{DeepSeek-R1}: Incentivizing Reasoning Capability in {LLM}s via Reinforcement Learning},
  author={ DeepSeek-AI and Daya Guo and Dejian Yang and Haowei Zhang and Junxiao Song and Peiyi Wang and Qihao Zhu and Runxin Xu and Ruoyu Zhang and Shirong Ma and Xiao Bi and Xiaokang Zhang and Xingkai Yu and Yu Wu and Z. F. Wu and Zhibin Gou and Zhihong Shao and Zhuoshu Li and Ziyi Gao and Aixin Liu and Bing Xue and Bingxuan Wang and Bochao Wu and Bei Feng and Chengda Lu and Chenggang Zhao and Chengqi Deng and Chenyu Zhang and Chong Ruan and Damai Dai and Deli Chen and Dongjie Ji and Erhang Li and Fangyun Lin and Fucong Dai and Fuli Luo and Guangbo Hao and Guanting Chen and Guowei Li and H. Zhang and Han Bao and Hanwei Xu and Haocheng Wang and Honghui Ding and Huajian Xin and Huazuo Gao and Hui Qu and Hui Li and Jianzhong Guo and Jiashi Li and Jiawei Wang and Jingchang Chen and Jingyang Yuan and Junjie Qiu and Junlong Li and J. L. Cai and Jiaqi Ni and Jian Liang and Jin Chen and Kai Dong and Kai Hu and Kaige Gao and Kang Guan and Kexin Huang and Kuai Yu and Lean Wang and Lecong Zhang and Liang Zhao and Litong Wang and Liyue Zhang and Lei Xu and Leyi Xia and Mingchuan Zhang and Minghua Zhang and Minghui Tang and Meng Li and Miaojun Wang and Mingming Li and Ning Tian and Panpan Huang and Peng Zhang and Qiancheng Wang and Qinyu Chen and Qiushi Du and Ruiqi Ge and Ruisong Zhang and Ruizhe Pan and Runji Wang and R. J. Chen and R. L. Jin and Ruyi Chen and Shanghao Lu and Shangyan Zhou and Shanhuang Chen and Shengfeng Ye and Shiyu Wang and Shuiping Yu and Shunfeng Zhou and Shuting Pan and S. S. Li and Shuang Zhou and Shaoqing Wu and Shengfeng Ye and Tao Yun and Tian Pei and Tianyu Sun and T. Wang and Wangding Zeng and Wanjia Zhao and Wen Liu and Wenfeng Liang and Wenjun Gao and Wenqin Yu and Wentao Zhang and W. L. Xiao and Wei An and Xiaodong Liu and Xiaohan Wang and Xiaokang Chen and Xiaotao Nie and Xin Cheng and Xin Liu and Xin Xie and Xingchao Liu and Xinyu Yang and Xinyuan Li and Xuecheng Su and Xuheng Lin and X. Q. Li and Xiangyue Jin and Xiaojin Shen and Xiaosha Chen and Xiaowen Sun and Xiaoxiang Wang and Xinnan Song and Xinyi Zhou and Xianzu Wang and Xinxia Shan and Y. K. Li and Y. Q. Wang and Y. X. Wei and Yang Zhang and Yanhong Xu and Yao Li and Yao Zhao and Yaofeng Sun and Yaohui Wang and Yi Yu and Yichao Zhang and Yifan Shi and Yiliang Xiong and Ying He and Yishi Piao and Yisong Wang and Yixuan Tan and Yiyang Ma and Yiyuan Liu and Yongqiang Guo and Yuan Ou and Yuduan Wang and Yue Gong and Yuheng Zou and Yujia He and Yunfan Xiong and Yuxiang Luo and Yuxiang You and Yuxuan Liu and Yuyang Zhou and Y. X. Zhu and Yanhong Xu and Yanping Huang and Yaohui Li and Yi Zheng and Yuchen Zhu and Yunxian Ma and Ying Tang and Yukun Zha and Yuting Yan and Z. Z. Ren and Zehui Ren and Zhangli Sha and Zhe Fu and Zhean Xu and Zhenda Xie and Zhengyan Zhang and Zhewen Hao and Zhicheng Ma and Zhigang Yan and Zhiyu Wu and Zihui Gu and Zijia Zhu and Zijun Liu and Zilin Li and Ziwei Xie and Ziyang Song and Zizheng Pan and Zhen Huang and Zhipeng Xu and Zhongyu Zhang and Zhen Zhang},
  year={2025},
  eprint={2501.12948},
  archivePrefix={arXiv},
  url={https://arxiv.org/abs/2501.12948},
}

@inproceedings{yu2025dapo,
  title={{DAPO}: An Open-Source {LLM} Reinforcement Learning System at Scale},
  author={Yu, Qiying and Zhang, Zheng and Zhu, Ruofei and Yuan, Yufeng and Zuo, Xiaochen and Yue, Yu and Dai, Weinan and Fan, Tiantian and Liu, Gaohong and Liu, Juncai and Liu, LingJun and Liu, Xin and Lin, Haibin and Lin, Zhiqi and Ma, Bole and Sheng, Guangming and Tong, Yuxuan and Zhang, Chi and Zhang, Mofan and Zhang, Ru and Zhang, Wang and Zhu, Hang and Zhu, Jinhua and Chen, Jiaze and Chen, Jiangjie and Wang, Chengyi and Yu, Hongli and Song, Yuxuan and Wei, Xiangpeng and Zhou, Hao and Liu, Jingjing and Ma, Wei-Ying and Zhang, Ya-Qin and Yan, Lin and Wu, Yonghui and Wang, Mingxuan},
  booktitle={Advances in Neural Information Processing Systems},
  volume={38, Main Conference},
  pages={113222--113244},
  publisher={Curran Associates, Inc.},
  year={2025},
  doi={10.52202/085713-3775},
  eprint={2503.14476},
  archivePrefix={arXiv},
  url={https://proceedings.neurips.cc/paper_files/paper/2025/hash/a4277440d50f1f15d2cb4c14f7e0c0d2-Abstract-Conference.html},
}

@inproceedings{li2025treepo,
  title={{TreePO}: Enhancing Policy Efficacy and Inference Efficiency with Tree Modeling},
  author={Li, Yizhi and Gu, Qingshui and Wen, Zhoufutu and Li, Ziniu and Yuan, Ruibin and Xing, Tianshun and Guo, Shuyue and Zheng, Tuney and Zhou, Xin and Qu, Xingwei and Zhou, Wangchunshu and Zhang, Zheng and Shen, Wei and Xue, Wei and Liu, Qian and Lin, Chenghua and Yang, Jian and Zhang, Ge and Huang, Wenhao},
  booktitle={Forty-Third International Conference on Machine Learning},
  year={2026},
  eprint={2508.17445},
  archivePrefix={arXiv},
  url={https://icml.cc/virtual/2026/poster/61677},
}

@misc{yang2025treerpo,
  title={{TreeRPO}: Tree Relative Policy Optimization},
  author={Zhicheng Yang and Zhijiang Guo and Yinya Huang and Xiaodan Liang and Yiwei Wang and Jing Tang},
  year={2025},
  eprint={2506.05183},
  archivePrefix={arXiv},
  url={https://arxiv.org/abs/2506.05183},
}

@inproceedings{guo2025spo,
  title={Segment Policy Optimization: Effective Segment-Level Credit Assignment in {RL} for Large Language Models},
  author={Guo, Yiran and Xu, Lijie and Liu, Jie and Ye, Dan and Qiu, Shuang},
  booktitle={Advances in Neural Information Processing Systems},
  volume={38, Main Conference},
  pages={114399--114431},
  publisher={Curran Associates, Inc.},
  year={2025},
  doi={10.52202/085713-3815},
  eprint={2505.23564},
  archivePrefix={arXiv},
  url={https://proceedings.neurips.cc/paper_files/paper/2025/hash/a6536243037d1e32c20de85137d478da-Abstract-Conference.html},
}

@inproceedings{wang2024mathshepherd,
  title={{Math-Shepherd}: Verify and Reinforce {LLM}s Step-by-step without Human Annotations},
  author={Wang, Peiyi and Li, Lei and Shao, Zhihong and Xu, Runxin and Dai, Damai and Li, Yifei and Chen, Deli and Wu, Yu and Sui, Zhifang},
  booktitle={Proceedings of the 62nd Annual Meeting of the Association for Computational Linguistics (Volume 1: Long Papers)},
  month={aug},
  year={2024},
  address={Bangkok, Thailand},
  publisher={Association for Computational Linguistics},
  pages={9426--9439},
  doi={10.18653/v1/2024.acl-long.510},
  eprint={2312.08935},
  archivePrefix={arXiv},
  url={https://aclanthology.org/2024.acl-long.510/},
}

@article{neyman1934,
  title={On the Two Different Aspects of the Representative Method: The Method of Stratified Sampling and the Method of Purposive Selection},
  author={Neyman, Jerzy},
  journal={Journal of the Royal Statistical Society},
  volume={97},
  number={4},
  pages={558--625},
  year={1934},
  doi={10.2307/2342192},
  url={https://doi.org/10.2307/2342192},
}

@book{gelman2013bayesian,
  title={Bayesian Data Analysis},
  author={Gelman, Andrew and Carlin, John B. and Stern, Hal S. and Dunson, David B. and Vehtari, Aki and Rubin, Donald B.},
  edition={3},
  publisher={Chapman and Hall/CRC},
  address={Boca Raton, FL},
  year={2013},
  isbn={9780429113079},
  doi={10.1201/b16018},
  url={https://doi.org/10.1201/b16018},
}

@article{chujie2025group,
  title={Group Sequence Policy Optimization},
  author={Chujie Zheng and Shixuan Liu and Mingze Li and Xiong-Hui Chen and Bowen Yu and Chang Gao and Kai Dang and Yuqiong Liu and Rui Men and An Yang and Jingren Zhou and Junyang Lin},
  journal={arXiv preprint arXiv:2507.18071},
  eprint={2507.18071},
  archivePrefix={arXiv},
  url={https://arxiv.org/abs/2507.18071},
  year={2025}
}

@article{haldar2026f,
  title={{f-GRPO} and Beyond: Divergence-Based Reinforcement Learning Algorithms for General {LLM} Alignment},
  author={Haldar, Rajdeep and Mei, Lantao and Lin, Guang and Xing, Yue and Song, Qifan},
  journal={arXiv preprint arXiv:2602.05946},
  year={2026},
  eprint={2602.05946},
  archivePrefix={arXiv},
  url={https://arxiv.org/abs/2602.05946},
}

@article{hatamizadeh2026igrpo,
  title={{iGRPO}: Self-Feedback-Driven {LLM} Reasoning},
  author={Hatamizadeh, Ali and Prabhumoye, Shrimai and Gitman, Igor and Lu, Ximing and Han, Seungju and Ping, Wei and Choi, Yejin and Kautz, Jan},
  journal={arXiv preprint arXiv:2602.09000},
  year={2026},
  eprint={2602.09000},
  archivePrefix={arXiv},
  url={https://arxiv.org/abs/2602.09000},
}

@article{yoo2026rollout,
  title={Rollout-Level Advantage-Prioritized Experience Replay for {GRPO}},
  author={Yoo, Gyeongtae and Park, Sanghyeok and Jang, Soohyuk and Kim, Ik-hwan and Yoon, Sungroh},
  journal={arXiv preprint arXiv:2606.04560},
  year={2026},
  eprint={2606.04560},
  archivePrefix={arXiv},
  url={https://arxiv.org/abs/2606.04560},
}

@article{wang2025lambda,
  title={{$\lambda$-GRPO}: Unifying the {GRPO} Frameworks with Learnable Token Preferences},
  author={Wang, Yining and Zhao, Jinman and Zhao, Chuangxin and Guan, Shuhao and Penn, Gerald and Liu, Shinan},
  journal={arXiv preprint arXiv:2510.06870},
  year={2025},
  eprint={2510.06870},
  archivePrefix={arXiv},
  url={https://arxiv.org/abs/2510.06870},
}

@inproceedings{setlur2025rewarding,
  title={Rewarding Progress: Scaling Automated Process Verifiers for {LLM} Reasoning},
  author={Setlur, Amrith and Nagpal, Chirag and Fisch, Adam and Geng, Xinyang and Eisenstein, Jacob and Agarwal, Rishabh and Agarwal, Alekh and Berant, Jonathan and Kumar, Aviral},
  booktitle={International Conference on Learning Representations},
  volume={2025},
  pages={60808--60838},
  year={2025},
  eprint={2410.08146},
  archivePrefix={arXiv},
  url={https://proceedings.iclr.cc/paper_files/paper/2025/hash/98711dea460bdefe0e651ca23ec98ba2-Abstract-Conference.html},
}

@inproceedings{li2025red,
  title={{RED}: Unleashing Token-Level Rewards from Holistic Feedback via Reward Redistribution},
  author={Li, Jiahui and Li, Lin and Chang, Tai-Wei and Kuang, Kun and Chen, Long and Zhou, Jun and Yang, Cheng},
  booktitle={Proceedings of the 2025 Conference on Empirical Methods in Natural Language Processing},
  month={nov},
  address={Suzhou, China},
  publisher={Association for Computational Linguistics},
  pages={4993--5022},
  year={2025},
  doi={10.18653/v1/2025.emnlp-main.252},
  eprint={2411.08302},
  archivePrefix={arXiv},
  url={https://aclanthology.org/2025.emnlp-main.252/},
}

@inproceedings{zhou2025t,
  title={{T}-{REG}: Preference Optimization with Token-Level Reward Regularization},
  author={Zhou, Wenxuan and Zhang, Shujian and Zhao, Lingxiao and Meng, Tao},
  booktitle={Proceedings of the 63rd Annual Meeting of the Association for Computational Linguistics (Volume 1: Long Papers)},
  month={jul},
  address={Vienna, Austria},
  publisher={Association for Computational Linguistics},
  pages={27876--27889},
  year={2025},
  doi={10.18653/v1/2025.acl-long.1353},
  eprint={2412.02685},
  archivePrefix={arXiv},
  url={https://aclanthology.org/2025.acl-long.1353/},
}

@article{cao2025scar,
  title={{SCAR}: Shapley Credit Assignment for More Efficient {RLHF}},
  author={Cao, Meng and Zhang, Shuyuan and Chang, Xiao-Wen and Precup, Doina},
  journal={arXiv preprint arXiv:2505.20417},
  year={2025},
  eprint={2505.20417},
  archivePrefix={arXiv},
  url={https://arxiv.org/abs/2505.20417},
}

@inproceedings{yang2026int,
  title={{InT}: Self-Proposed Interventions Enable Credit Assignment in {LLM} Reasoning},
  author={Yang, Matthew and Bai, Hao and Wu, Ian and Yang, Gene and Setlur, Amrith and Kumar, Aviral},
  booktitle={International Conference on Learning Representations},
  volume={2026},
  pages={85054--85091},
  year={2026},
  eprint={2601.14209},
  archivePrefix={arXiv},
  url={https://proceedings.iclr.cc/paper_files/paper/2026/hash/89062e4d480c0c3a88d36c20c5694459-Abstract-Conference.html},
}

@article{xie2025capo,
  title={{CAPO}: Towards Enhancing {LLM} Reasoning through Generative Credit Assignment},
  author={Xie, Guofu and Shi, Yunsheng and Tian, Hongtao and Yao, Ting and Zhang, Xiao},
  journal={arXiv preprint arXiv:2508.02298},
  year={2025},
  eprint={2508.02298},
  archivePrefix={arXiv},
  url={https://arxiv.org/abs/2508.02298},
}

@inproceedings{li2026ssvpo,
  title={SSVPO: Effective Step-Level Credit Assignment for RL Training of Language Models},
  author={Li, Yugu and Cao, Zehong and Qiao, Jianglin and Hu, Siyi},
  booktitle={The Fourteenth International Conference on Learning Representations},
  year={2026}
}

@article{wang2026grpo,
  title={{GRPO-VPS}: Enhancing Group Relative Policy Optimization with Verifiable Process Supervision for Effective Reasoning},
  author={Wang, Jingyi and Zhu, Lei and Weng, Tengjin and Wu, Song-Li and Tan, Haochen and Chen, Jierun and Tao, Chaofan and Bai, Haoli and Hou, Lu and Shang, Lifeng and Zhang, Xiao-Ping},
  journal={arXiv preprint arXiv:2604.20659},
  year={2026},
  eprint={2604.20659},
  archivePrefix={arXiv},
  url={https://arxiv.org/abs/2604.20659},
}

@article{tran2025exploiting,
  title={Exploiting Tree Structure for Credit Assignment in {RL} Training of {LLM}s},
  author={Tran, Hieu and Yao, Zonghai and Yu, Hong},
  journal={arXiv preprint arXiv:2509.18314},
  year={2025},
  eprint={2509.18314},
  archivePrefix={arXiv},
  url={https://arxiv.org/abs/2509.18314},
}

@inproceedings{zhao2026reinforced,
  title={Reinforced Efficient Reasoning via Semantically Diverse Exploration},
  author={Zhao, Ziqi and Ren, Zhaochun and Zou, Jiahong and Yang, Liu and Xu, Zhiwei and Ge, Xuri and Chen, Zhumin and Ma, Xinyu and Shi, Daiting and Wang, Shuaiqiang and Yin, Dawei and Xin, Xin},
  booktitle={Proceedings of the 64th Annual Meeting of the Association for Computational Linguistics (Volume 1: Long Papers)},
  month={jul},
  address={San Diego, California, United States},
  publisher={Association for Computational Linguistics},
  pages={47994--48007},
  year={2026},
  doi={10.18653/v1/2026.acl-long.2216},
  eprint={2601.05053},
  archivePrefix={arXiv},
  url={https://aclanthology.org/2026.acl-long.2216/},
}

\clearpage
\appendix
\raggedbottom

\section{A conditional precision-allocation benchmark}
\label{app:allocation}

\method{} does not optimize the number of continuations per selected pivot: it uses $n_b=M$ throughout.
This appendix records a benchmark for a different, conditional question---if the pivot set were already fixed and per-pivot sample counts could vary, how would a first-order approximation allocate a total budget $B$ to reduce MC-induced gradient perturbation?
The benchmark is not used by the algorithm and does not derive the $H_{\max}$ placement score.

\paragraph{Setup.}
Condition on a fixed accepted backbone trajectory and a fixed pivot set $\mathcal P$.
To isolate MC precision, let the raw, unshrunk mean at $b\in\mathcal P$ use $n_b$ continuations, with error $\epsilon_b=\Vhat_{\mathrm{raw}}(s_b)-V^{\pi_\theta}(s_b)$, $\E[\epsilon_b\mid s_b]=0$, and $\Var(\epsilon_b\mid s_b)=\Var(R\mid s_b)/n_b$.
For this appendix only, let $A^{(0)}_{i,t}$ denote Eq.~\eqref{eq:final} with the raw pivot means inserted directly (equivalently, $n_0=0$), and define $J_{tb}:=\partial A^{(0)}_{i,t}/\partial\Vhat_{\mathrm{raw}}(s_b)$.
Writing $\psi_t=\nabla_\theta\log\pi_\theta(y_t\mid s_{t-1})$, the probe-induced gradient perturbation is
$\delta g=\sum_t\big(\sum_{b\in\mathcal P}J_{tb}\epsilon_b\big)\psi_t$.
For this unclipped gradient proxy and conditionally independent probe sets,
\[
\E_{\epsilon}\|\delta g\|^2
\;=\;
\sum_{b\in\mathcal P}\frac{\Var(R\mid s_b)}{n_b}\,C_b,
\qquad
C_b=\Big\|\sum_t J_{tb}\psi_t\Big\|^2.
\]
Assuming one continuation per budget unit and relaxing $n_b$ to positive reals, minimizing this proxy under $\sum_{b\in\mathcal P}n_b=B$ gives the Neyman-style allocation \citep{neyman1934}
\begin{equation}
n_b^\star
=B\,\frac{\sqrt{C_b\,\Var(R\mid s_b)}}
{\sum_{b'\in\mathcal P}\sqrt{C_{b'}\,\Var(R\mid s_{b'})}}
\;\propto\;
\sqrt{C_b\,\Var(R\mid s_b)}.
\label{eq:neyman}
\end{equation}

\paragraph{Scope and approximations.}
First, Eq.~\eqref{eq:neyman} allocates precision among an already selected pivot set; it neither selects pivots nor supplies an analytic justification for $H_{\max}$.
Second, before centering, a boundary estimate enters only the two adjacent segments; after Eq.~\eqref{eq:center}, it also changes the trajectory-wide centering offset.
For an interior boundary $b=\tau_j$, let $I_b^-=I_j$ denote the segment ending at $b$ and $I_b^+=I_{j+1}$ the segment beginning after $b$.
Under the unshrunk map, the corresponding final advantage sensitivity is
\[
J_{tb}=\frac{\alpha}{\sigma_g}\left[
\mathbf{1}\{t\in I_b^-\}-\mathbf{1}\{t\in I_b^+\}
-\frac{|I_b^-|-|I_b^+|}{T_i}
\right].
\]
Thus $C_b$ must use this full centered Jacobian; adjacent-segment length alone is not generally the correct footprint.
Third, the benchmark ignores policy-ratio clipping and optimizer dynamics, ignores cross-pivot covariance, treats $n_b$ as continuous, and assigns equal cost to continuations despite variation in generated suffix length.
Fourth, it omits the contraction in Eq.~\eqref{eq:shrink}, which introduces bias and changes mean-squared error.
Finally, Eq.~\eqref{eq:neyman} depends on total continuation variance $\Var(R\mid s_b)$, whereas placement targets action-fork variance $\Var_{a\sim\pi_\theta}(Q^{\pi_\theta}(s_b,a))$.
For all of these reasons, we retain Eq.~\eqref{eq:neyman} only as a precision benchmark; the implemented method ranks with $H_{\max}$ and assigns the same $M$ continuations to every selected pivot.

\section{Estimator details and token-mean preservation}
\label{app:proofs}

\begin{proof}[Proof of Proposition~\ref{prop:mean}]
Equation~\eqref{eq:center} gives $\sum_t\tilde A_{i,t}=0$; hence Eq.~\eqref{eq:final} yields
$T_i^{-1}\sum_t A_{i,t}=A_{\mathrm{base},i}=(r_i-\mu_g)/\sigma_g$, proving the first claim.
If $K=0$, $A^{\mathrm{raw}}_{i,t}=d_1/\sigma_g$ is constant across the response, so its centered residual vanishes and
$A_{i,t}=A_{\mathrm{base},i}=(r_i-\mu_g)/\sigma_g$.
\end{proof}

\paragraph{Pseudo-count interpretation and bias--variance decomposition.}
For binary rewards, let $S_k=M\Vhat_{\mathrm{raw}}(\tau_k)$ and $v_k=V^{\pi_\theta}(s_{\tau_k})$.
When $n_0>0$ and $0<\mu_g<1$, the working prior
$v_k\sim\operatorname{Beta}(n_0\mu_g,n_0(1-\mu_g))$ yields Eq.~\eqref{eq:shrink} as the posterior mean after observing $S_k$ successes in $M$ trials.
Because the prior center $\mu_g$ is estimated from the accepted backbone group, this is a plug-in empirical-Bayes interpretation rather than a fully specified generative model.

More generally, set $\lambda=M/(M+n_0)$ and $\varsigma_k^2=\Var(R\mid s_{\tau_k})$.
Conditional on the selected state and the realized target $\mu_g$, independent on-policy probes give
\[
\mathbb E[\Vhat(\tau_k)]-v_k
=(1-\lambda)(\mu_g-v_k),
\qquad
\Var(\Vhat(\tau_k))
=\lambda^2\frac{\varsigma_k^2}{M},
\]
and hence
\[
\operatorname{MSE}(\Vhat(\tau_k))
=
\lambda^2\frac{\varsigma_k^2}{M}
+(1-\lambda)^2(v_k-\mu_g)^2.
\]
Thus $n_0$ makes explicit the tradeoff between Monte Carlo variance and mismatch between the pivot value and the group-level target.

\paragraph{Exact-update conditions.}
On the same accepted backbone trajectories, $K=0$ or $\alpha=0$ reproduces the configured GRPO advantage tensor when group statistics, response masks, and standardization are identical.
It also reproduces the optimization update when the policy parameters, objective (including clipping and KL terms), and optimizer state and settings are matched.

\paragraph{Token-mean advantages}
For the unclipped score-function estimator at the rollout policy, the residual adds
\[
\Delta g_i
=
\frac{\alpha}{T_i}
\sum_{t=1}^{T_i}
\tilde A_{i,t}
\nabla_\theta\log\pi_\theta(y_{i,t}\mid q_i,y_{i,<t}).
\]
Although $\sum_t\tilde A_{i,t}=0$, this score-weighted sum is generally nonzero because the token score vectors differ.
Proposition~\ref{prop:mean} therefore preserves each trajectory's token mean advantage, not the update produced by a nonzero residual.

\paragraph{Centering and segment length.}
Because the centering subtracts the token-mean, a residual's effect on a segment scales with the complement of that segment's length share; long segments retain proportionally less of their raw delta than short ones.

\paragraph{Shrinkage is not a fallback.}
Suppose $K\ge1$ and every interior estimate is contracted fully to $\mu_g$.
Writing $p=(T_i-\tau_K)/T_i$ for the terminal segment's token fraction and $b_i=(r_i-\mu_g)/\sigma_g$, the raw token-level advantage is $0$ before the terminal segment and $b_i$ on it.
After centering, the residual is therefore $-p b_i$ before that segment and $(1-p)b_i$ on it, which is generally nonzero.
Removing all pivots ($K=0$) or setting the residual weight to zero ($\alpha=0$) guarantees the configured GRPO advantage tensor independently of the probe values; contraction alone does not.

\section{Cutoff-rescue probe policy}
\label{app:forced_answer}

The R1-Distill-Qwen-1.5B regime (2K context) uses a cutoff-rescue policy for probes that would otherwise exhaust the context window without a gradeable answer.
From each selected prefix, a probe first continues normally under $\pi_\theta$.
If it terminates before the operational cutoff, its continuation is kept and scored unchanged.
If it remains unfinished at the cutoff, the continuation is trimmed there and scored directly when the trimmed text is already gradeable.
Otherwise, an open reasoning block is closed when necessary, an answer template is appended, and a short reserved decode produces the answer submitted to the verifier.
Thus natural and rescued samples are outcomes of one modified continuation policy $\pi_{\mathrm{rescue},k}$, and their mean estimates $V^{\pi_{\mathrm{rescue},k}}(s)$ rather than the unmodified $V^{\pi_\theta}(s)$.
In the recipe-matched replay, 98.0\% of probe continuations finish naturally; the remaining 2.0\% use cutoff handling.

\section{Extended experimental results}
\label{app:extended-results}

\begin{figure}[H]
  \centering
  \definecolor{FigInk}{HTML}{26364B}
\definecolor{FigGray}{HTML}{758093}
\definecolor{FigBlue}{HTML}{3F78B5}
\definecolor{FigPurple}{HTML}{7655A5}
\definecolor{FigGreen}{HTML}{2D8768}
\definecolor{FigRed}{HTML}{C95252}
\resizebox{\linewidth}{!}{%
\begin{tikzpicture}[
  font=\sffamily\scriptsize,
  line cap=round,
  dotP/.style={circle, fill=FigBlue, inner sep=0pt, minimum size=4.6pt},
  dotT/.style={rectangle, fill=FigGreen, inner sep=0pt, minimum size=4.2pt},
  dotS/.style={diamond, fill=FigPurple, inner sep=0pt, minimum size=4.8pt},
  vlab/.style={font=\sffamily\scriptsize, text=FigInk, inner sep=1.2pt},
  cell/.style={draw=white, line width=0.55pt, minimum width=1.05cm, minimum height=0.50cm, inner sep=0pt, font=\sffamily\scriptsize}
]
\begin{scope}
  \node[anchor=west, font=\sffamily\bfseries\scriptsize, text=FigInk] at (-3.90,0.85)
    {\textbf{(a)}\enspace $\Delta$ Avg-6 vs.\ GRPO reference (pp)};
  \node[dotP] at (-3.80,0.42) {}; \node[anchor=west, text=FigInk] at (-3.70,0.42) {ProbePO};
  \node[dotT] at (-2.35,0.42) {}; \node[anchor=west, text=FigInk] at (-2.25,0.42) {TreePO};
  \node[dotS] at (-0.95,0.42) {}; \node[anchor=west, text=FigInk] at (-0.85,0.42) {SPO-tree};
  \foreach \v in {-4,-2,2,4}
    \draw[FigGray!30, line width=0.4pt] (\v*0.30,-3.30) -- (\v*0.30,0.10);
  \draw[FigInk!70, line width=0.6pt] (0,-3.30) -- (0,0.10);
  \foreach \v in {-4,-2,0,2,4}
    \node[anchor=north, text=FigGray!88!black, font=\sffamily\scriptsize] at (\v*0.30,-3.32) {\v};
  \node[anchor=east, text=FigInk, align=right] at (-1.65,-0.50) {DeepSeekMath-7B};
  \node[dotP] (a1) at (1.72*0.30,-0.20) {}; \node[vlab, anchor=west] at (a1.east) {+1.72};
  \node[dotT] (a2) at (-1.04*0.30,-0.50) {}; \node[vlab, anchor=east] at (a2.west) {$-$1.04};
  \node[dotS] (a3) at (-2.28*0.30,-0.80) {}; \node[vlab, anchor=east] at (a3.west) {$-$2.28};
  \node[anchor=east, text=FigInk, align=right] at (-1.65,-1.55) {R1-Distill-Qwen-1.5B\\(2K context)};
  \node[dotP] (b1) at (1.89*0.30,-1.25) {}; \node[vlab, anchor=west] at (b1.east) {+1.89};
  \node[dotT] (b2) at (0.48*0.30,-1.55) {}; \node[vlab, anchor=west] at (b2.east) {+0.48};
  \node[dotS] (b3) at (-0.48*0.30,-1.85) {}; \node[vlab, anchor=east] at (b3.west) {$-$0.48};
  \node[anchor=east, text=FigInk, align=right] at (-1.65,-2.60) {Qwen3-4B};
  \node[dotP] (d1) at (2.84*0.30,-2.30) {}; \node[vlab, anchor=west] at (d1.east) {+2.84};
  \node[dotT] (d2) at (0.78*0.30,-2.60) {}; \node[vlab, anchor=west] at (d2.east) {+0.78};
  \node[dotS] (d3) at (0.17*0.30,-2.90) {}; \node[vlab, anchor=west] at (d3.east) {+0.17};
\end{scope}
\begin{scope}[shift={(3.80,0)}]
  \node[anchor=west, font=\sffamily\bfseries\scriptsize, text=FigInk] at (-1.10,0.85)
    {\textbf{(b)}\enspace ProbePO $-$ GRPO by benchmark (pp)};
  \node[anchor=south, text=FigInk, font=\sffamily\scriptsize, align=center] at (0*1.05+0.525,-0.55) {MATH-\\500};
  \node[anchor=south, text=FigInk, font=\sffamily\scriptsize, align=center] at (1*1.05+0.525,-0.55) {Olympiad\\Bench};
  \node[anchor=south, text=FigInk, font=\sffamily\scriptsize, align=center] at (2*1.05+0.525,-0.55) {College\\Math};
  \node[anchor=south, text=FigInk, font=\sffamily\scriptsize, align=center] at (3*1.05+0.525,-0.55) {AMC};
  \node[anchor=south, text=FigInk, font=\sffamily\scriptsize, align=center] at (4*1.05+0.525,-0.55) {Minerva\\Math};
  \node[anchor=south, text=FigInk, font=\sffamily\scriptsize, align=center] at (5*1.05+0.28+0.525,-0.55) {AIME24\\(@32)};
  \draw[FigGray!70, line width=0.5pt, densely dotted] (5*1.05+0.14,-0.58) -- (5*1.05+0.14,-3.00);
  \node[anchor=east, text=FigInk] at (-0.08,-1.00) {DeepSeek-7B};
  \node[anchor=east, text=FigInk, align=right] at (-0.08,-1.85) {R1-Distill-Qwen\\1.5B};
  \node[anchor=east, text=FigInk] at (-0.08,-2.70) {Qwen3-4B};
  \node[cell, fill=FigBlue!31]              at (0*1.05+0.525,-1.00) {+3.26};
  \node[cell, fill=FigBlue!21]              at (1*1.05+0.525,-1.00) {+1.81};
  \node[cell, fill=FigBlue!11]              at (2*1.05+0.525,-1.00) {+0.20};
  \node[cell, fill=FigBlue!27]              at (3*1.05+0.525,-1.00) {+2.71};
  \node[cell, fill=FigBlue!17]              at (4*1.05+0.525,-1.00) {+1.17};
  \node[cell, fill=FigBlue!17]              at (5*1.05+0.28+0.525,-1.00) {+1.15};
  \node[cell, fill=FigBlue!24]              at (0*1.05+0.525,-1.85) {+2.24};
  \node[cell, fill=FigBlue!25]              at (1*1.05+0.525,-1.85) {+2.35};
  \node[cell, fill=FigBlue!21]              at (2*1.05+0.525,-1.85) {+1.86};
  \node[cell, fill=FigBlue!31]              at (3*1.05+0.525,-1.85) {+3.24};
  \node[cell, fill=FigBlue!14]              at (4*1.05+0.525,-1.85) {+0.60};
  \node[cell, fill=FigBlue!16]              at (5*1.05+0.28+0.525,-1.85) {+1.04};
  \node[cell, fill=FigBlue!32]              at (0*1.05+0.525,-2.70) {+3.40};
  \node[cell, fill=FigBlue!26]              at (1*1.05+0.525,-2.70) {+2.63};
  \node[cell, fill=FigBlue!12]              at (2*1.05+0.525,-2.70) {+0.39};
  \node[cell, fill=FigBlue!32]              at (3*1.05+0.525,-2.70) {+3.39};
  \node[cell, fill=FigBlue!22]              at (4*1.05+0.525,-2.70) {+1.84};
  \node[cell, fill=FigBlue!42]              at (5*1.05+0.28+0.525,-2.70) {+5.42};
  \draw[FigGray, densely dashed, line width=0.6pt] (4*1.05,-2.10) rectangle (4*1.05+1.05,-1.60);
  \draw[FigGray, densely dashed, line width=0.6pt] (5*1.05+0.28,-2.10) rectangle (5*1.05+0.28+1.05,-1.60);
\end{scope}
\end{tikzpicture}%
}
  \caption{\textbf{Terminal-checkpoint gains.}
  \emph{(a)} Avg-6 difference from each setting's GRPO reference.
  \emph{(b)} \method{} minus GRPO by benchmark.
  AIME24 uses Mean@32; dashed cells mark tasks on which another method scores higher.}
  \label{fig:outcome}
\end{figure}

\subsection{Full dense-probing comparison}
\label{app:vineppo-full}

\begin{table}[H]
\centering
\caption{\textbf{Quality and cost against dense probing.}
Panel (a) reports terminal benchmark accuracy under the shared DeepSeekMath
setting; panel (b) reports end-to-end training cost.}
\label{tab:vineppo-quality-full}
\definecolor{tablegroup}{RGB}{244,245,247}
\definecolor{tableours}{RGB}{238,246,241}
\definecolor{tableavg}{RGB}{247,247,247}
\setlength{\tabcolsep}{3pt}
\renewcommand{\arraystretch}{1.05}

{\small
\begin{tabularx}{\linewidth}{@{}>{\raggedright\arraybackslash}p{0.17\linewidth}*{6}{>{\centering\arraybackslash}X}>{\columncolor{tableavg}\centering\arraybackslash}X@{}}
\toprule
\rowcolor{tablegroup}
\multicolumn{8}{@{}l@{}}{\textbf{(a) Terminal accuracy (\%; higher is better)}} \\
\midrule
\textbf{Method} & \makecell[c]{\textbf{MATH}\\\textbf{500}} & \makecell[c]{\textbf{Olympiad}\\\textbf{Bench}} & \makecell[c]{\textbf{College}\\\textbf{Math}} & \textbf{AMC} & \makecell[c]{\textbf{Minerva}\\\textbf{Math}} & \makecell[c]{\textbf{AIME24}\\\textbf{(@32)}} & \textbf{Avg-6} \\
\midrule
\rowcolor{tableours}
\textbf{\method{} (ours)} & \textbf{45.81} & \textbf{14.47} & \textbf{22.77} & \textbf{17.09} & \textbf{12.96} & \textbf{2.92} & \textbf{19.34} \\
VinePPO & 43.69 & 11.65 & 22.11 & 15.81 & 10.27 & 0.31 & 17.31 \\
\bottomrule
\end{tabularx}
}

\vspace{6pt}

{\footnotesize
\begin{tabularx}{\linewidth}{@{}>{\raggedright\arraybackslash}p{0.17\linewidth}>{\centering\arraybackslash}p{0.38\linewidth}>{\centering\arraybackslash}p{0.20\linewidth}>{\centering\arraybackslash}X@{}}
\toprule
\rowcolor{tablegroup}
\multicolumn{4}{@{}l@{}}{\textbf{(b) End-to-end training cost (lower is better)}} \\
\midrule
\textbf{Method} & \makecell[c]{\textbf{Auxiliary continuations}\\\textbf{per update}} & \textbf{H200-hours} & \makecell[c]{\textbf{Relative}\\\textbf{time}} \\
\midrule
\rowcolor{tableours}
\textbf{\method{} (ours)} & $\leq 2{,}560$ & \textbf{231} & \textbf{1.00$\times$} \\
VinePPO & 27{,}265 mean & 944 & 4.08$\times$ \\
\bottomrule
\end{tabularx}

\vspace{3pt}
\parbox{\linewidth}{\raggedright
\emph{Note.} \method{}'s auxiliary-continuation count is a configured upper
bound, whereas VinePPO's is the observed per-update mean. H200-hours are
rounded; relative time uses the measured end-to-end durations.}
}
\end{table}

The original VinePPO paper reports 46.0\% for its MATH-500 entry.  Our
evaluation instead uses the strict verl/Hendrycks grader rather than
VinePPO's Minerva-style grader and runs on eight H200 rather than H100 GPUs.
Both methods start from the same DeepSeekMath-7B-SFT initialization and MATH
source split, use 64 prompt groups with $G=8$ backbone responses, cap responses
at 1{,}024 tokens, and train for 1{,}000 outer updates on one exclusive node.
VinePPO retains its released nine-continuation-per-boundary recipe.

\subsection{Controlled Qwen3 component study}
\label{app:qwen-controls}

The study in Section~\ref{sec:component-ablations} uses Qwen3-4B-Base, the
common ROSE MATH data and objective, the same frozen evaluator, and one
$8\times$B200 node.  Every arm retains all generated prompt groups, and the
probe-based arms share the same normalization, coefficient clipping, and
$B=40$, $M=8$, and $\alpha=1$ settings.

The structured reference uses $H_{\max}$ with $L=8$, a 60-token separation, a
two-pivot core cap per trajectory, and post-answer exclusion.  Random
placement replaces this complete placement module with uniform sampling over
interior pre-EOS token positions while preserving the probe budget and credit
estimator.  It therefore evaluates structured placement as a module rather
than entropy ranking alone.  The no-shrinkage arm changes only $n_0$ from 4 to
0, removing the group-baseline contribution to the value estimate.
Table~\ref{tab:qwen-component-ablation} reports the matched component results.
At each update, we compute the 99th percentile of $|A_t|$ over response tokens
included in the policy loss and average it across the common component-study
training window.  Removing shrinkage raises this statistic from 2.672 to 3.195
(19.6\%), consistent with its regularization role.

\subsection{Accuracy at matched training time}
\label{app:qwen-equal-time}

\begin{table}[H]
\centering
\caption{\textbf{Equal-training-time Qwen3-4B comparison} (\%).
Both methods use approximately 11.14 logged trainer hours on the same
$8\times$B200 hardware.}
\label{tab:qwen-equal-time}
\footnotesize
\renewcommand{\arraystretch}{0.98}
\setlength{\tabcolsep}{8pt}
\begin{tabular}{lccc}
\toprule
Method & Training time (hours) & MATH-500 & Avg-6 \\
\midrule
\method{} & 11.14 & \textbf{80.01} & \textbf{41.03} \\
GRPO & 11.14 & 75.52 & 39.08 \\
\midrule
$\Delta$ & --- & \textbf{$+4.49$} & \textbf{$+1.95$} \\
\bottomrule
\end{tabular}
\end{table}

Cumulative logged trainer time includes backbone and auxiliary generation,
optimization, periodic evaluation, and checkpointing.

\subsection{Headline Qwen3-4B training dynamics}
\label{app:qwen3-dynamics}
Figure~\ref{fig:qwen3-training-dynamics} complements the terminal comparison
with performance over the shared training horizon.  On MATH-500, \method{}
maintains the strongest trajectory throughout the later evaluations.  At the
final monitored checkpoint, it also leads on both AIME benchmarks and remains
competitive on AMC.  Together, these curves show that the headline MATH-500
result reflects sustained training behavior rather than an isolated
checkpoint.

\begin{figure}[H]
  \centering
  \includegraphics[width=\linewidth]{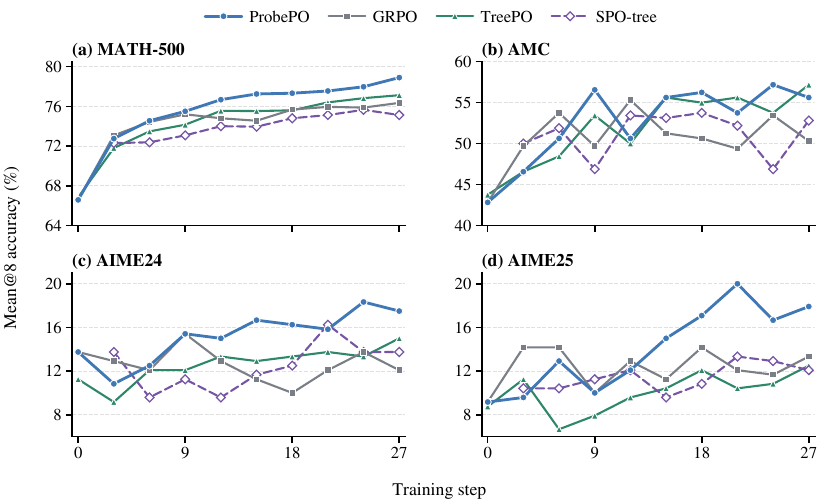}
  \caption{\textbf{Qwen3-4B training dynamics.}
  Monitoring accuracy over the shared headline horizon; points denote
  evaluations and lines connect successive checkpoints.}
  \label{fig:qwen3-training-dynamics}
\end{figure}

\subsection{Trained-checkpoint credit redistribution}
\label{app:credit-examples}

Figure~\ref{fig:credit-examples} makes the redistribution in
Eq.~\eqref{eq:final} concrete using two responses from a replay of the
trained R1-Distill-Qwen-1.5B \method{} checkpoint.  The replay uses the
reported $G=8$, $M=8$ generation, selector, budget, shrinkage, and
cutoff-rescue settings on a fixed cohort of 64 held-out MATH validation
prompts.  The R1 training grader assigns $1$ to a correct boxed answer,
$0.1$ to a boxed but incorrect answer, and $0$ when no valid boxed answer can
be extracted; each interior value therefore estimates expected
verifier reward rather than literal success probability.
This replay visualizes the resulting advantage tensor on concrete responses; Table~\ref{tab:main} provides the efficacy evidence and Figure~\ref{fig:oracle} evaluates selector placement. We select two complementary responses from the fixed cohort to illustrate distinct redistribution patterns. Each has two semantic pivots and three segments of at least 64 tokens. The plotted values use the deployed $M=8$ estimator, and Table~\ref{tab:credit-sensitivity} reports the pooled-$M=32$ sensitivity check.

\begin{table}[H]
\centering
\caption{\textbf{Sensitivity of the displayed token-advantage profiles.}
The pooled-$M=32$ calculation preserves the $(+,-,+)$ pattern in column~(a) and
the terminal-most-negative ordering in column~(b).}
\label{tab:credit-sensitivity}
\small
\setlength{\tabcolsep}{6pt}
\begin{tabular}{lccc}
\toprule
Profile & Segment 1 & Segment 2 & Segment 3 \\
\midrule
Panel (a), deployed $M=8$ & $+0.34$ & $-0.41$ & $+0.72$ \\
Panel (a), pooled $M=32$ & $+0.41$ & $-0.10$ & $+0.53$ \\
Panel (b), deployed $M=8$ & $-1.08$ & $-1.08$ & $-2.89$ \\
Panel (b), pooled $M=32$ & $-1.37$ & $-0.80$ & $-3.24$ \\
\bottomrule
\end{tabular}
\end{table}

The $M=32$ calculation is a descriptive top-up sensitivity check, not an
independent replication or an alternative training configuration.

\paragraph{Pivot-aligned question and response context.}
The questions below omit only their raw diagram code; the cube diagram is verbalized explicitly.
The sampled-response excerpts are verbatim apart from whitespace and mathematical typesetting, ellipses mark omitted spans, and [P1]/[P2] are editorial markers placed at the selector's boundaries.
The selector produced these pivots before the examples were chosen for display; the unabridged responses and probe records are retained in the companion artifact.

\paragraph{Example A: cube paths (correct response).}
\textbf{Question (diagram verbalized).}
How many different three-step paths along the edges of a cube are there that take you from vertex $A$ to vertex $B$?
(A step is from a vertex to an adjacent vertex sharing an edge.)
The diagram labels $A=(0,0,1)$ and $B=(1,1,0)$, opposite cube vertices.
\textbf{Ground truth:} $6$.

\textbf{Sampled response excerpts.}
\begin{quote}
\footnotesize
\setlength{\emergencystretch}{1em}
\textbf{Segment 1 ($+0.34$).}
``\ldots from $A$ $(0,0,1)$ to $B$ $(1,1,0)$, the coordinates need to change from $(0,0,1)$ to $(1,1,0)$.
So, the changes needed are $+1$ in $x$, $+1$ in $y$, and $-1$ in $z$.
But each step can only change one coordinate at a time.
So, we need to find the number of sequences of three moves that result in changing $x$ from $0$ to $1$, $y$ from $0$ to $1$, and $z$ from $1$ to $0$.''

\noindent\textcolor{blue!70!black}{\textbf{[P1: selected paragraph boundary, token 393]}}

\noindent\textbf{Segment 2 ($-0.41$).}
``This is similar to a path counting problem in a grid, but in three dimensions.
Hmm, maybe using recursion or combinatorics.
Alternatively, we can model this as a graph problem where each vertex is a cube vertex, and edges connect adjacent vertices. \ldots
From $A$ $(0,0,1)$, the possible first moves are to $(1,0,1)$, $(0,1,1)$, or $(0,0,0)$.
But $(0,0,0)$ is not $B$, so we need to see which moves lead closer to $B$.
Wait, but actually, we need to reach $B$ in exactly three steps. So''

\noindent\textcolor{blue!70!black}{\textbf{[P2: selected ``So'' boundary, token 545]}}

\noindent\textbf{Segment 3 ($+0.72$).}
``, maybe we can use multinomial coefficients or consider the number of ways to arrange the moves. \ldots
We need one move in $x$, one move in $y$, and one move in $z$, but in some order. \ldots
The number of ways to arrange the moves in $x$, $y$, and $z$ \ldots is $3! = 6$. \ldots
The number of different three-step paths from vertex $A$ to vertex $B$ on a cube is $\boxed{6}$.''
\end{quote}

The first pivot follows the correct reduction to three coordinate changes; the intervening segment detours into a less directed graph search, and the second pivot precedes the return to the permutation argument.
Thus the selected boundaries make the temporary estimated-value dip and recovery in Figure~\ref{fig:credit-examples}, column~(a), directly interpretable.

\paragraph{Example B: parallelogram ratio reversal (incorrect response).}
\textbf{Question (diagram code omitted).}
Let $ABCD$ be a parallelogram.
We have that $M$ is the midpoint of $AB$ and $N$ is the midpoint of $BC$.
The segments $DM$ and $DN$ intersect $AC$ at $P$ and $Q$, respectively.
If $AC=15$, what is $QA$?
\textbf{Ground truth:} $10$.

\textbf{Sampled response excerpts.}
\begin{quote}
\footnotesize
\setlength{\emergencystretch}{1em}
\textbf{Segment 1 ($-1.08$).}
``I think using coordinate geometry might help here. \ldots
Let me place point $A$ at $(0,0)$.
Since it's a parallelogram, let me assume point $B$ is at $(2b,0)$, point $D$ at $(0,2d)$, and then point $C$ would be at $(2b,2d)$. \ldots
$M=(b,0)$. \ldots $N=(2b,d)$.''

\noindent\textcolor{blue!70!black}{\textbf{[P1: selected paragraph boundary, token 383]}}

\noindent\textbf{Segment 2 ($-1.08$).}
``To find $P$, the intersection of $DM$ and $AC$, we set the equations equal \ldots $s=1/3$. \ldots
Similarly, now find $Q$, which is the intersection of $DN$ and $AC$. \ldots
$2d-ds=2ds \Rightarrow 2d=3ds \Rightarrow s=2/3$.
So, $s=2/3$, so $Q$ is at $(4b/3,4d/3)$. \ldots
But the answer seems to be $QA=10$, but I need to confirm.''

\noindent\textcolor{blue!70!black}{\textbf{[P2: selected paragraph boundary, token 919]}}

\noindent\textbf{Segment 3 ($-2.89$).}
``Alternatively, using coordinates, the length of $AC$ is $15$. \ldots
From earlier, $s=1/3$ for $Q$, so $AQ=1/3$ of $AC=5$, and $QC=10$.
Wait, but the question is about $QA$, which is $5$. \ldots
The length of $QA$ is $\boxed{5}$.''
\end{quote}

The first pivot separates coordinate setup from the intersection calculations.
The middle segment reaches the correct $Q$ ratio $s=2/3$ and states $QA=10$; after the second pivot, the response reuses the earlier $s=1/3$ ratio associated with the other intersection and changes its answer to $5$.
This makes the terminal-most-negative allocation in Figure~\ref{fig:credit-examples}, column~(b), consistent with the displayed reasoning transition.
\FloatBarrier
\section{Comparison with related credit-assignment methods}
\label{app:trees}

\begin{table}[H]
\centering
\caption{\textbf{Mechanism comparison.} Cells summarize the cited configurations.}
\label{tab:matrix}
\label{tab:capability}
\footnotesize
\definecolor{tableours}{RGB}{238,246,241}

\begin{minipage}{\linewidth}
\raggedright
\textit{Key.}
$\checkmark$: support;
$\times$: absence;
$\pm$: partial or implementation-qualified support;
\textemdash: not applicable;
$\S$: surrogate-probe family \citep{li2026ssvpo,wang2026grpo}.
``Exact GRPO'' denotes a $K=0$ or $\alpha=0$ tensor setting with fixed accepted
trajectories and matched preprocessing.
\end{minipage}

\vspace{4pt}
\renewcommand{\arraystretch}{1.10}
\setlength{\tabcolsep}{4pt}
\begin{tabularx}{\linewidth}{@{}>{\raggedright\arraybackslash}p{0.34\linewidth}*{3}{>{\centering\arraybackslash}X}@{}}
\toprule
\multicolumn{4}{@{}l@{}}{\textit{(a) Measurement and value estimation}} \\
\midrule
\textbf{Method} & \makecell[c]{\textbf{Budgeted}\\\textbf{spend}} & \makecell[c]{\textbf{Reward-free}\\\textbf{interior pilot}} & \makecell[c]{\textbf{Estimates}\\$V^{\pi_\theta}(s)$} \\
\midrule
GRPO \citep{shao2024deepseekmath} & (no spend) & \textemdash & \textemdash \\
VinePPO \citep{kazemnejad2025vineppo} & $\times$ & \textemdash\ (uniform) & \checkmark \\
SPO-tree \citep{guo2025spo} & $\times$ & $\times$ & $\pm$ \\
TreeRPO \citep{yang2025treerpo} & $\times$ & $\times$ & $\pm$ \\
TreePO \citep{li2025treepo} & $\pm$ & $\times$ & $\pm$ \\
ROSE \citep{zhao2026reinforced} & $\pm$ & \checkmark & $\times$ \\
Surrogate probes$^{\S}$ & $\pm$ & $\pm$ & $\times$ \\
PRM-label MC \citep{luo2024omegaprm} & \checkmark & $\times$ (offline) & \checkmark \\
\midrule
\rowcolor{tableours}
\textbf{\method{}} & \checkmark & \checkmark & \checkmark \\
\bottomrule
\end{tabularx}

\vspace{7pt}

\begin{tabularx}{\linewidth}{@{}>{\raggedright\arraybackslash}p{0.34\linewidth}*{3}{>{\centering\arraybackslash}X}@{}}
\toprule
\multicolumn{4}{@{}l@{}}{\textit{(b) Credit and update behavior}} \\
\midrule
\textbf{Method} & \makecell[c]{\textbf{Token-mean-}\\\textbf{preserving injection}} & \makecell[c]{\textbf{Exact GRPO}\\\textbf{tensor setting}} & \makecell[c]{\textbf{Actor multiset}\\\textbf{unchanged}} \\
\midrule
GRPO \citep{shao2024deepseekmath} & (base) & (base) & \checkmark \\
VinePPO \citep{kazemnejad2025vineppo} & $\times$ & $\times$ & \checkmark \\
SPO-tree \citep{guo2025spo} & $\times$ & $\times$ & $\times$ \\
TreeRPO \citep{yang2025treerpo} & $\times$ & $\times$ & $\times$ \\
TreePO \citep{li2025treepo} & $\times$ & $\times$ & $\times$ \\
ROSE \citep{zhao2026reinforced} & $\times$ & $\times$ & $\times$ \\
Surrogate probes$^{\S}$ & $\times$ & $\times$ & \checkmark \\
PRM-label MC \citep{luo2024omegaprm} & (offline) & (offline) & (offline) \\
\midrule
\rowcolor{tableours}
\textbf{\method{}} & \checkmark & \checkmark & \checkmark \\
\bottomrule
\end{tabularx}
\end{table}

Table~\ref{tab:matrix} highlights \method{}'s combination of budgeted policy-matched measurement, token-mean-preserving credit, exact GRPO recovery, and an unchanged actor batch.

\section{Oracle study details}
\label{app:oracle}

The oracle uses up to five candidate boundaries per trajectory, six head actions with 32 continuations per action, and a 16-continuation rejection-sampled tail stratum when head-action coverage is below $0.9$.
The MATH-precalc and GSM8K artifacts contain 286 and 212 candidates, of which 239 and 104 have finite action-fork variance estimates.
The offline oracle tests whether forward entropy tracks action-fork variance on archived trajectories. It computes entropy from the top 10 returned token log-probabilities and uses an adaptive forward window; the deployed selector uses full-vocabulary entropy with fixed $L=8$. The controlled random-placement ablation in Section~\ref{sec:component-ablations} evaluates the complete deployed placement module.
The archived backbone trajectories use temperature $0.35$ and top-$p$ $0.9$, first-action weights use temperature $1$ and top-$p$ $1$, and forced suffixes use temperature $0.35$ and top-$p$ $0.95$.
For Figure~\ref{fig:oracle}A, we resample prompts with replacement, retain every finite boundary belonging to each sampled prompt (including multiplicities), and recompute the pooled tied-rank Spearman correlation.
There is one oracle trajectory per prompt, so prompt and trajectory clusters coincide; all 100{,}000 replicates are finite.
For panel B, we retain trajectories with at least two finite candidates and positive total oracle variance, average uniformly over ties in the selector maximum, and compare captured variance with the exact conditional random-selection expectation $1/m_i$ for a trajectory with $m_i$ candidates.
The interval resamples the paired selector-minus-random differences by trajectory.
The reported intervals quantify prompt- or trajectory-level variation conditional on the oracle estimates.

\begin{figure}[H]
  \centering
  \definecolor{OracleInk}{HTML}{26364B}
\definecolor{OracleGray}{HTML}{7A8493}
\definecolor{OracleBlue}{HTML}{3F78B5}
\definecolor{OraclePale}{HTML}{DDE8F3}
\resizebox{\linewidth}{!}{%
\begin{tikzpicture}[
  font=\sffamily\scriptsize,
  line cap=round,
  ci/.style={OracleBlue, line width=1.05pt},
  hmax/.style={circle, fill=OracleBlue, draw=white, line width=0.45pt,
               inner sep=0pt, minimum size=5.2pt},
  htoken/.style={rectangle, fill=OracleGray, draw=white, line width=0.45pt,
                 inner sep=0pt, minimum size=5.0pt},
  random/.style={rectangle, fill=OracleGray, draw=white, line width=0.45pt,
                 inner sep=0pt, minimum size=4.8pt}
]
\begin{scope}
  \node[anchor=west, font=\sffamily\bfseries\scriptsize, text=OracleInk]
    at (-0.10,0.72) {\textbf{(a)}\enspace Correlation with action-fork variance};
  \node[htoken] at (0.05,0.28) {};
  \node[anchor=west, text=OracleInk] at (0.15,0.28) {$H_{\mathrm{token}}$};
  \node[hmax] at (1.35,0.28) {};
  \node[anchor=west, text=OracleInk] at (1.45,0.28) {$H_{\mathrm{fwd,max}}$};

  \foreach \v in {-0.1,0,0.1,0.2,0.3,0.4,0.5} {
    \draw[OracleGray!23, line width=0.35pt]
      ({2.1+7*\v},-3.12) -- ({2.1+7*\v},0.02);
    \node[anchor=north, text=OracleGray!88!black, font=\sffamily\scriptsize]
      at ({2.1+7*\v},-3.15) {\v};
  }
  \draw[OracleInk!58, densely dashed, line width=0.65pt]
    ({2.1+7*0},-3.12) -- ({2.1+7*0},0.02);
  \draw[OracleInk!60, line width=0.55pt] (1.40,-3.12) -- (5.60,-3.12);
  \node[anchor=north, text=OracleInk, font=\sffamily\scriptsize]
    at (3.50,-3.48) {Spearman's $\rho$};

  \node[anchor=east, text=OracleInk] at (1.28,-0.64) {MATH-precalc};
  \node[anchor=east, text=OracleInk] at (1.28,-2.18) {GSM8K};

  \draw[OracleGray, line width=1.05pt] ({2.1+7*(-0.029)},-0.42) -- ({2.1+7*0.311},-0.42);
  \draw[OracleGray, line width=0.65pt] ({2.1+7*(-0.029)},-0.50) -- ({2.1+7*(-0.029)},-0.34);
  \draw[OracleGray, line width=0.65pt] ({2.1+7*0.311},-0.50) -- ({2.1+7*0.311},-0.34);
  \node[htoken] at ({2.1+7*0.147},-0.42) {};
  \draw[ci] ({2.1+7*0.119},-0.86) -- ({2.1+7*0.415},-0.86);
  \draw[ci, line width=0.65pt] ({2.1+7*0.119},-0.94) -- ({2.1+7*0.119},-0.78);
  \draw[ci, line width=0.65pt] ({2.1+7*0.415},-0.94) -- ({2.1+7*0.415},-0.78);
  \node[hmax] at ({2.1+7*0.271},-0.86) {};

  \draw[OracleGray, line width=1.05pt] ({2.1+7*0.133},-1.96) -- ({2.1+7*0.465},-1.96);
  \draw[OracleGray, line width=0.65pt] ({2.1+7*0.133},-2.04) -- ({2.1+7*0.133},-1.88);
  \draw[OracleGray, line width=0.65pt] ({2.1+7*0.465},-2.04) -- ({2.1+7*0.465},-1.88);
  \node[htoken] at ({2.1+7*0.305},-1.96) {};
  \draw[ci] ({2.1+7*0.089},-2.40) -- ({2.1+7*0.434},-2.40);
  \draw[ci, line width=0.65pt] ({2.1+7*0.089},-2.48) -- ({2.1+7*0.089},-2.32);
  \draw[ci, line width=0.65pt] ({2.1+7*0.434},-2.48) -- ({2.1+7*0.434},-2.32);
  \node[hmax] at ({2.1+7*0.268},-2.40) {};
\end{scope}

\begin{scope}[shift={(6.25,0)}]
  \node[anchor=west, font=\sffamily\bfseries\scriptsize, text=OracleInk]
    at (0,0.72) {\textbf{(b)}\enspace Variance captured at one boundary};
  \node[random] at (0.10,0.28) {};
  \node[anchor=west, text=OracleInk] at (0.20,0.28) {Random};
  \node[hmax] at (1.48,0.28) {};
  \node[anchor=west, text=OracleInk] at (1.58,0.28) {$H_{\mathrm{fwd,max}}$};

  \foreach \v in {0.3,0.4,0.5,0.6} {
    \draw[OracleGray!23, line width=0.35pt]
      ({1.45+10*(\v-0.3)},-2.72) -- ({1.45+10*(\v-0.3)},0.02);
    \node[anchor=north, text=OracleGray!88!black, font=\sffamily\scriptsize]
      at ({1.45+10*(\v-0.3)},-2.75) {\v};
  }
  \draw[OracleInk!60, line width=0.55pt] (1.45,-2.72) -- (4.45,-2.72);
  \node[anchor=north, text=OracleInk, font=\sffamily\scriptsize]
    at (2.95,-3.08) {fraction of finite within-trajectory variance};

  \node[anchor=east, text=OracleInk] at (1.28,-0.73) {MATH-precalc};
  \node[anchor=east, text=OracleInk] at (1.28,-1.93) {GSM8K};

  \draw[OraclePale, line width=3.0pt]
    ({1.45+10*(0.359-0.3)},-0.73) -- ({1.45+10*(0.539-0.3)},-0.73);
  \node[random] at ({1.45+10*(0.359-0.3)},-0.73) {};
  \node[hmax] at ({1.45+10*(0.539-0.3)},-0.73) {};
  \node[anchor=west, align=left, text=OracleInk, font=\sffamily\scriptsize]
    at (4.60,-0.73) {$\Delta=0.180$\\$[0.099,\,0.262]$};

  \draw[OraclePale, line width=3.0pt]
    ({1.45+10*(0.388-0.3)},-1.93) -- ({1.45+10*(0.497-0.3)},-1.93);
  \node[random] at ({1.45+10*(0.388-0.3)},-1.93) {};
  \node[hmax] at ({1.45+10*(0.497-0.3)},-1.93) {};
  \node[anchor=west, align=left, text=OracleInk, font=\sffamily\scriptsize]
    at (4.60,-1.93) {$\Delta=0.109$\\$[-0.028,\,0.244]$};
\end{scope}
\end{tikzpicture}%
}
  \caption{\textbf{Offline selector diagnostics.}
  \emph{(a)} Rank correlation between reward-free signals and action-fork
  variance.  \emph{(b)} Oracle variance captured by the top-ranked versus a
  uniformly selected boundary.  Error bars and brackets show the bootstrap
  intervals described in the text.}
  \label{fig:oracle}
\end{figure}

\FloatBarrier
\section{Recipes, hyperparameters, and reproducibility}
\label{app:repro}

\paragraph{Training data.}
The original MATH benchmark contains 12{,}500 competition problems, split into
7{,}500 training and 5{,}000 test problems \citep{hendrycks2021measuring}.
The DeepSeekMath-7B-SFT and R1-Distill-Qwen-1.5B regimes follow VinePPO's
OpenAI/PRM800K repartition \citep{lightman2024lets,kazemnejad2025vineppo}:
the original 7{,}500 training problems and 4{,}500 test problems form a
12{,}000-problem development pool, while the remaining 500 uniformly sampled
test problems form MATH-500.  VinePPO reserves another 500 development
problems for validation, leaving 11{,}500 source training problems.  DeepSeekMath
uses VinePPO's prompt wrapper; R1 renders the same identities with the SPO/R1
template.  Qwen3-4B-Base follows ROSE \citep{zhao2026reinforced}, using the
original 7{,}500-problem MATH training split from
\texttt{DigitalLearningGmbH/MATH-lighteval}.  ROSE's 2{,}048-token prompt
filter removes none under our non-thinking Qwen3 rendering.

\paragraph{Prompt and answer templates.}
We retain the prompt convention paired with each model family rather than imposing one cross-family wrapper.
DeepSeekMath receives VinePPO's baked user string, \texttt{[MATH\_TASK] Problem:} followed by the problem and \texttt{Solution:}; a pass-through chat template adds no system instruction, and correctness is read from the generated final answer by the verl/Hendrycks grader.
R1 receives the literal SPO/R1 wrapper: it asks the model to solve efficiently and clearly, think step by step, and end with \texttt{Therefore, the final answer is: \textbackslash boxed\{ANSWER\}. I hope it is correct}; the rendered assistant prefix opens \texttt{<think>}, and the GRESO/math-verify path grades the final boxed answer.
Qwen3 retains ROSE's system message requesting step-by-step reasoning and a final answer in \texttt{\textbackslash boxed\{\}}, followed by the raw problem as the user message.
Following ROSE's \emph{no-CoT} (non-thinking) configuration \citep{zhao2026reinforced}, we render this conversation with Qwen3's native chat template and \texttt{enable\_thinking=false}; this suppresses the model-specific \texttt{<think>... </think>} block even though the retained ROSE system message still requests a worked solution.
The frozen ROSE evaluator extracts the last boxed expression and applies normalized exact match.

\FloatBarrier
\begin{table}[H]
\centering
\caption{\textbf{Shared \method{} acquisition and estimator settings.}
These values apply to the three headline \method{} arms in Table~\ref{tab:main}.}
\label{tab:probepo-shared-recipe}
\small
\renewcommand{\arraystretch}{1.08}
\setlength{\tabcolsep}{8pt}
\begin{tabular}{lc}
\toprule
Parameter & Value \\
\midrule
Backbone group size $G$ & 8 responses per prompt \\
Group-wide probe budget $B$ & 40 continuation rollouts \\
Continuations per selected pivot $M$ & 8 \\
Shrinkage pseudo-count $n_0$ & 4 \\
Residual weight $\alpha$ & 1 \\
Stage-1 score & $H_{\max}$ with forward window $L=8$ \\
Per-trajectory shortlist & $\max\{1,\operatorname{round}(0.30J)\}$ of $J$ candidates \\
Positional weight $w_{\mathrm{pos}}$ & uniform ($1$) \\
\bottomrule
\end{tabular}
\end{table}

\begin{table}[H]
\centering
\caption{\textbf{Optimization and batching for the headline \method{} runs.}
Batch and mini-batch are prompt counts.}
\label{tab:probepo-training-recipe}
\footnotesize
\renewcommand{\arraystretch}{1.08}
\setlength{\tabcolsep}{4.2pt}
\begin{tabular}{@{}lccc@{}}
\toprule
Parameter & \makecell[c]{DeepSeekMath\\7B} & \makecell[c]{R1-Distill-Qwen\\1.5B (2K context)} & \makecell[c]{Qwen3\\4B} \\
\midrule
Learning rate & $5{\times}10^{-7}$ & $1{\times}10^{-6}$ & $1{\times}10^{-6}$ \\
Schedule
& \makecell[c]{constant;\\no warmup}
& \makecell[c]{3\% linear warmup,\\then constant}
& \makecell[c]{constant;\\no warmup} \\
Batch / mini & $64/8$ & $64/16$ & $512/8$ \\
PPO epochs & 1 & 1 & 1 \\
PPO clip ratio & 0.2 & 0.2 & 0.2 \\
KL regularization
& \makecell[c]{loss-side $10^{-3}$\\\texttt{low\_var\_kl}}
& \makecell[c]{loss-side $10^{-3}$\\\texttt{low\_var\_kl}}
& \makecell[c]{loss-side $10^{-3}$\\\texttt{low\_var\_kl}} \\
Reward-side KL & off & off & off \\
Loss aggregation & \texttt{token-mean} & \texttt{token-mean} & \makecell[c]{\texttt{seq-mean-}\\\texttt{token-sum-norm}} \\
\bottomrule
\end{tabular}
\end{table}

\begin{table}[H]
\centering
\caption{\textbf{Decoding, context, and training horizons for the headline \method{} runs.}
Probe continuations inherit the backbone decoding parameters.}
\label{tab:probepo-decoding-recipe}
\footnotesize
\renewcommand{\arraystretch}{1.08}
\setlength{\tabcolsep}{5.5pt}
\begin{tabular}{@{}lccc@{}}
\toprule
Parameter & \makecell[c]{DeepSeekMath\\7B} & \makecell[c]{R1-Distill-Qwen\\1.5B (2K context)} & \makecell[c]{Qwen3\\4B} \\
\midrule
Train/probe temperature, top-$p$ & $0.8,\ 0.95$ & $1.0,\ 1.0$ & $1.0,\ 1.0$ \\
Maximum prompt tokens & 512 & 512 & 2{,}048 \\
Configured response-token limit & 1{,}024 & 4{,}096 & 4{,}096 \\
Rollout context limit & 4{,}096 & \makecell[c]{2{,}048\\(context binding)} & 6{,}144 \\
Training horizon (steps) & 1{,}000 & 1{,}000 & 27 \\
\bottomrule
\end{tabular}
\end{table}

\FloatBarrier

The launcher field \texttt{budget\_per\_trajectory=5} is denominated in continuation rollouts rather than selected pivots; with $G=8$, it gives the group-wide $B=40$ budget in Table~\ref{tab:probepo-shared-recipe}, sufficient for at most five $M=8$ pivots per prompt group.

\paragraph{Compute resources.}
Each headline experiment used one node with eight NVIDIA H200 GPUs; every controlled Qwen3 arm used one node with eight NVIDIA B200 GPUs.

\paragraph{Evaluation protocol.}
DeepSeekMath uses the verl grader with a 4{,}096-token evaluation context, R1-Distill-Qwen-1.5B uses math-verify with a 2{,}048-token context, and Qwen3 uses the frozen ROSE last-boxed exact-match evaluator with a 6{,}144-token total context and a 4{,}096-token generation cap.
The GRPO reference is matched to each headline \method{} arm on context, optimizer microbatch, rollout temperature, KL coefficient, and learning rate.
The controlled Qwen3 arms use the same frozen six-dataset evaluation protocol.
The retained manifest maps every table row to its run and evaluator aggregate.

\paragraph{Collection.}
Across the controlled Qwen3 arms, we hold prompt batches, advantage normalization, coefficient clipping, and group handling fixed.

\paragraph{Checkpoint protocol.}
All six scores in a row of Table~\ref{tab:main} come from the same checkpoint; AMC or MinervaMath add-on evaluations are joined only at that same checkpoint.
This removes benchmark-specific test-set checkpoint selection, and all rows within a setting share the same training horizon: step 1{,}000 for DeepSeekMath and R1, and step 27 for Qwen3.
The component arms are compared at their first common checkpoint.
The equal-time comparison uses checkpoints with nearly identical cumulative logged training time of approximately 11.14 hours.

\end{document}